\documentclass{article}
\PassOptionsToPackage{numbers, compress}{natbib}


\usepackage{iclr2023_conference,times}
\usepackage[pagebackref=true,breaklinks=true,colorlinks,citecolor=teal,linkcolor=red,bookmarks=false]{hyperref}




\usepackage[utf8]{inputenc} 
\usepackage[T1]{fontenc}    
\usepackage{hyperref}       
\usepackage{url}            
\usepackage{booktabs}       
\usepackage{amsfonts}       
\usepackage{nicefrac}       
\usepackage{microtype}      
\usepackage{xcolor}         

\usepackage{wrapfig}
\usepackage{xspace}
\usepackage{graphicx}
\usepackage{caption}
\usepackage{subcaption} 
\usepackage{algorithm}

\usepackage{array}
\usepackage{algpseudocode}

\usepackage{listings}
\definecolor{codeblue}{rgb}{0.25,0.5,0.5}
\definecolor{keywords}{rgb}{0.08,0.54,.02}
\lstset{
  backgroundcolor=\color{white},
  basicstyle=\fontsize{8.4pt}{8.4pt}\ttfamily\selectfont,
  columns=fullflexible,
  breaklines=true,
  captionpos=b,
  commentstyle=\fontsize{8.4pt}{8.4pt}\color{codeblue},
  keywordstyle=\color{keywords}\fontsize{8.4pt}{8.4pt},
  morekeywords={for,in,max},
}

\definecolor{darkgreen}{RGB}{5,110,49}

\usepackage{amsthm}
\usepackage{thmtools, thm-restate}
\usepackage[frozencache,cachedir=minted-cache]{minted}

\usepackage{amssymb}
\usepackage{amsmath,amsfonts}
\usepackage{amsopn}
\usepackage{bm} 
\usepackage{multirow}

\usepackage{mathtools}

\newcommand{\defeq}{\vcentcolon=}


\newcommand{\field}[1]{\mathbb{#1}}
\newcommand{\R}{\field{R}} 

\newcommand{\E}{\mathbb{E}}



\newcommand{\ProbOpr}[1]{\mathbb{#1}}

\newcommand{\expect}[2]{%
\ifthenelse{\equal{#2}{}}{\ProbOpr{E}_{#1}}
{\ifthenelse{\equal{#1}{}}{\ProbOpr{E}\left[#2\right]}{\ProbOpr{E}_{#1}\left[#2\right]}}} 
\newcommand{\var}[2]{%
\ifthenelse{\equal{#2}{}}{\ProbOpr{VAR}_{#1}}
{\ifthenelse{\equal{#1}{}}{\ProbOpr{VAR}\left[#2\right]}{\ProbOpr{VAR}_{#1}\left[#2\right]}}} 

\DeclareMathOperator*{\argmax}{\mathrm{argmax}}

\DeclareMathOperator*{\softmax}{softmax}






%


%
\newcommand{\eat}[1]{}

\newcommand{\s}{\mathbf{s}}
\newcommand{\sn}{\mathbf{s'}}
\renewcommand{\a}{\mathbf{a}}
\newcommand{\z}{\mathbf{z}}
\newcommand{\D}{\mathcal{D}}
\renewcommand{\L}{\mathbb{L}}
\newcommand{\G}{\mathcal{G}}

\newcommand{\pix}{\kern 0.1em}
\newcommand{\pmm}{\kern 0.35em$\pm$\kern 0.35em}

\newtheorem{theorem}{Theorem}
\newtheorem{lemma}{Lemma}[section]
\newtheorem{corollary}{Corollary}[lemma]

\newif\ifcomments
\commentstrue

\ifcomments
\newcommand{\dg}[1]{{\color{blue} [DG: #1]}}
\newcommand{\jh}[1]{{\color{red} [JH: #1]}}
\newcommand{\mfg}[1]{{\color{cyan} [MG: #1]}}
\newcommand{\se}[1]{{\color{magenta} [SE: #1]}}
\newcommand{\needsref}[1]{{\color{orange}[CITE]}}
\else
\newcommand{\dg}[1]{}
\newcommand{\mfg}[1]{}
\newcommand{\se}[1]{}
\newcommand{\jh}[1]{}
\newcommand{\needsref}[1]{}
\fi

\definecolor{faintgray}{gray}{0.8}

\definecolor{nicegreen}{RGB}{11, 218, 11}
\definecolor{nicered}{RGB}{255, 120, 0}

\newcommand{\blue}[1]{\textbf{ #1}}




\title{Extreme Q-Learning: MaxEnt RL without Entropy}

%

\author{%
  Divyansh Garg\thanks{Equal Contribution} \\
  Stanford University \\
  \texttt{divgarg@stanford.edu} \\
   \And
   Joey Hejna$^*$ \\
   Stanford University \\
   \texttt{jhejna@stanford.edu} \\
   \And
   Matthieu Geist \\
   Google Brain \\
   \texttt{mfgeist@google.com} \\ 
   \And
   Stefano Ermon \\
   Stanford University \\
   \texttt{ermon@stanford.edu} \\
}
\iclrfinalcopy
\begin{document}

\newcommand{\X}{$\mathcal{X}$-}

\maketitle

\begin{abstract}
Modern Deep Reinforcement Learning (RL) algorithms require estimates of the maximal Q-value, which are difficult to compute in continuous domains with an infinite number of possible actions. In this work, we introduce a new update rule for online and offline RL which directly models the maximal value using Extreme Value Theory (EVT), drawing inspiration from economics. By doing so, we avoid computing Q-values using out-of-distribution actions which is often a substantial source of error. Our key insight is to introduce an objective that directly estimates the optimal soft-value functions (LogSumExp) in the maximum entropy RL setting without needing to sample from a policy. Using EVT, we derive our \emph{Extreme Q-Learning} framework and consequently online and, for the first time, offline MaxEnt Q-learning algorithms, that do not explicitly require access to a policy or its entropy. Our method obtains consistently strong performance in the D4RL benchmark, outperforming prior works by \emph{10+ points} on the challenging Franka Kitchen tasks while offering moderate improvements over SAC and TD3 on online DM Control tasks. Visualizations and code can be found on our website \footnote{\url{https://div99.github.io/XQL/}}.

\end{abstract}
\section{Introduction}
\looseness=-1
Modern Deep Reinforcement Learning (RL) algorithms have shown broad success in challenging control \citep{haarnoja2018soft, schulman2015trust} and game-playing domains \citep{mnih2013playing}. While tabular Q-iteration or value-iteration methods are well understood, state of the art RL algorithms often make theoretical compromises in order to deal with deep networks, high dimensional state spaces, and continuous action spaces. In particular, standard Q-learning algorithms require computing the max or soft-max over the Q-function in order to fit the Bellman equations. Yet, almost all current off-policy RL algorithms for continuous control only indirectly estimate the Q-value of the next state with separate policy networks. 
Consequently, these methods only estimate the $Q$-function of the current policy, instead of the optimal $Q^*$, and rely on policy improvement via an actor. Moreover, actor-critic approaches on their own have shown to be catastrophic in the offline settings where actions sampled from a policy are consistently out-of-distribution~\citep{kumar2020conservative, Fujimoto2018AddressingFA}. As such, computing $\max Q$ for Bellman targets remains a core issue in deep RL. 


One popular approach is to train Maximum Entropy (MaxEnt) policies, in hopes that they are more robust to modeling and estimation errors \citep{ziebart2010modeling}. However, the Bellman backup $\mathcal{B^*}$ used in MaxEnt RL algorithms
still requires computing the \textbf{log-partition function over Q-values}, \emph{which is usually intractable in high-dimensional action spaces.} Instead, current methods like SAC \citep{haarnoja2018soft} rely on auxiliary policy networks, and  as a result do not estimate $\mathcal{B^*}$, the optimal Bellman backup. Our key insight is to apply extreme value analysis used in branches of Finance and Economics to Reinforcement Learning. Ultimately, this will allow us to directly model the LogSumExp over Q-functions in the MaxEnt Framework.


Intuitively, reward or utility-seeking agents will consider the maximum of the set of possible future returns. The Extreme Value Theorem (EVT) tells us that maximal values drawn from any exponential tailed distribution follows the Generalized Extreme Value (GEV) Type-1 distribution, also referred to as the Gumbel Distribution $\mathcal{G}$. The Gumbel distribution is thus a prime candidate for modeling
errors in Q-functions. In fact, McFadden's 2000 Nobel-prize winning work in Economics on discrete choice models~\citep{McFadden1972ConditionalLA} showed that soft-optimal utility functions with logit (or softmax) choice probabilities naturally arise when utilities are assumed to have Gumbel-distributed errors. This was subsequently generalized to stochastic MDPs by~\citet{econ_gumbel}. Nevertheless, these results have remained largely unknown in the RL community. By introducing a novel loss optimization framework, we bring them into the world of modern deep RL.
\looseness=-1



Empirically, we find that even modern deep RL approaches, for which errors are typically assumed to be Gaussian, exhibit errors that better approximate the Gumbel Distribution, see Figure~\ref{fig:errors}. By assuming errors to be Gumbel distributed, we obtain Gumbel Regression, a consistent estimator over log-partition functions even in continuous spaces. Furthermore, making this assumption about $Q$-values lets us derive a new Bellman loss objective that directly solves for the optimal MaxEnt Bellman operator $\mathcal{B}^*$, instead of the operator under the current policy $\mathcal{B^\pi}$. As soft optimality emerges from our framework, we can run MaxEnt RL independently of the policy. In the online setting, we avoid using a policy network to explicitly compute entropies. In the offline setting, we completely avoid sampling from learned policy networks, minimizing the aforementioned extrapolation error. Our resulting algorithms surpass or consistently match state-of-the-art (SOTA) methods while being practically simpler.
\looseness=-1


In this paper we outline the theoretical motivation for using Gumbel distributions in reinforcement learning, and show how it can be used to derive practical online and offline MaxEnt RL algorithms. Concretely, our contributions are as follows:


\begin{itemize}
    \item We motivate Gumbel Regression and show it allows calculation of the \textbf{log-partition function (LogSumExp) in continuous spaces}. We apply it to MDPs to present a novel loss objective for RL using maximum-likelihood estimation.
    
    \item Our formulation extends soft-Q learning to offline RL as well as continuous action spaces without the need of policy entropies. It allows us to compute optimal soft-values $V^*$ and soft-Bellman updates $\mathcal{B^*}$ using SGD, which are usually intractable in continuous settings.

    \item We provide the missing theoretical link between soft and conservative Q-learning, showing how these formulations can be made equivalent. We also show how Max-Ent RL emerges naturally from vanilla RL as a conservatism in our framework.
    
    
    \item Finally, we empirically demonstrate strong results in Offline RL, improving over prior methods by a large margin on the D4RL Franka Kitchen tasks, and performing moderately better than SAC and TD3 in Online RL, while theoretically avoiding actor-critic formulations.
\end{itemize}
\section{Preliminaries}
In this section we introduce Maximium Entropy (MaxEnt) RL and Extreme Value Theory (EVT), which we use to motivate our framework to estimate extremal values in RL. 

We consider an infinite-horizon Markov decision process (MDP), defined by the tuple $(\mathcal{S}, \mathcal{A}, \mathcal{P}, r, \gamma)$, where $\mathcal{S}, \mathcal{A}$ represent state and action spaces, $\mathcal{P}(\sn |\s, \a)$ represents the environment dynamics,
$r(\s, \a)$ represents the reward function, and $\gamma \in (0, 1)$ represents the discount factor. In the offline RL setting, we are given a dataset $\mathcal{D} = {(\s, \a, r, \sn)}$ of tuples sampled from trajectories under a behavior policy $\pi_\D$ without any additional environment interactions. We use $\rho_\pi(\s)$ to denote the distribution of states that a policy $\pi(\a|\s)$ generates. In the MaxEnt framework, an MDP with entropy-regularization is referred to as a soft-MDP~\citep{Bloem2014InfiniteTH} and we often use this notation. 
\subsection{Maximum Entropy RL}
\label{sec:maxent}
\looseness=-1
Standard RL seeks to learn a policy that maximizes the expected sum of (discounted) rewards $\mathbb{E}_\pi  \left[\sum_{t=0}^\infty \gamma^t r(\s_{t}, \a_{t})\right]$, for $(\s_{t}, \a_{t})$ drawn at timestep $t$ from the trajectory distribution that $\pi$ generates. 
\looseness=-1
 We consider a generalized version of Maximum Entropy RL that augments the standard reward objective with the KL-divergence between the policy and a reference distribution $\mu$:  $\mathbb{E}_\pi  [\sum_{t=0}^\infty \gamma^t (r(\s_t, \a_t) - \beta \log \frac{\pi(\a_t|\s_t)}{\mu(\a_t|\s_t)})]$, 
where $\beta$ is the regularization strength. When $\mu$ is uniform $\mathcal{U}$, this becomes the standard MaxEnt objective used in online RL up to a constant. In the offline RL setting, we choose $\mu$ to be the behavior policy $\pi_\mathcal{D}$ that generated the fixed dataset $\mathcal{D}$. Consequently, this objective enforces a conservative KL-constraint on the learned policy, keeping it close to the behavior policy~\citep{Neu2017AUV, haarnoja2018soft}.

\looseness=-1
In MaxEnt RL, the soft-Bellman operator
$\mathcal{B}^*: \mathbb{R^{\mathcal{S} \times \mathcal{A}}} \rightarrow  \mathbb{R^{\mathcal{S} \times \mathcal{A}}}$ is defined as
$
    (\mathcal{B}^* Q)(\s, \a) = r(\s, \a) + \gamma \mathbb{E}_{\sn \sim \mathcal{P}(\cdot|\s,\a)}V^*(\sn)
$ where $Q$ is the soft-Q function and $V^*$ is the optimal soft-value satisfying:
\begin{equation}
\label{eq:softbellman}
    V^*(\s) = \beta \log \sum_{\a} \mu(\a|\s) \exp{(Q(\s,\a) / \beta)} := \mathbb{L}^\beta_{a \sim \mu(\cdot | \s)} \left[Q(\s, \a)\right],
\vspace{-0.05in}
\end{equation}
where we denote the log-sum-exp (LSE) using an operator $\mathbb{L}^\beta$ for succinctness\footnote{In continuous action spaces, the sum over actions is replaced with an integral over the distribution $\mu$.}. The soft-Bellman operator
has a unique contraction $Q^*$~\citep{haarnoja2018soft} given by the soft-Bellman equation: $Q^*=\mathcal{B}^* Q^*$ and the optimal policy satisfies~\citep{haarnoja2017reinforcement}:
\begin{align}
\label{eq:pi}
 \pi^*(\a|\s) = \mu(\a|\s)\exp{((Q^*(\s, \a) - V^*(\s))/\beta)}.
 \vspace{-0.05in}
\end{align}
Instead of estimating soft-values for a policy $V^\pi(\s) = \mathbb{E}_{\a \sim \pi(\cdot|\s)}\left[Q(\s,\a) - \beta \log \frac{\pi(\a|\s)}{\mu(\a|\s)}\right]$, our approach will seek to directly fit the optimal soft-values $V^*$, i.e. the log-sum-exp (LSE) of Q values.

\subsection{Extreme Value Theorem}
\label{sec:evt}
\looseness=-1
The Fisher-Tippett or Extreme Value Theorem tells us that the maximum of i.i.d. samples from exponentially tailed distributions will asymptotically converge to the Gumbel distribution $\mathcal{G}(\mu, \beta)$, which has PDF $p(x) = \exp(-(z + e^{-z}))$ where $z = (x - \mu)/\beta$ with location parameter $\mu$ and scale parameter $\beta$.


\begin{theorem}[Extreme Value Theorem (EVT)~\citep{mood1950introduction, fisher_tippett_1928}]
\label{lemma:evt}
For i.i.d. random variables $X_1,..., X_n \sim f_X$,\ with exponential tails,
$\lim_{n \rightarrow \infty} \max_{i}({X_i})$ follows the Gumbel (GEV-1) distribution. 
Furthermore, $\mathcal{G}$ is max-stable, i.e. if $X_i \sim \mathcal{G}$, then $\max_i(X_i) \sim \mathcal{G}$ holds.
\end{theorem}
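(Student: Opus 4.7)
The plan is to handle the two claims separately, starting with the max-stability statement, which is the short, purely algebraic half. If $X_i \sim \mathcal{G}(\mu,\beta)$ are i.i.d., then their common CDF is $F(x) = \exp(-e^{-(x-\mu)/\beta})$, and independence gives
\begin{equation*}
\ProbOpr{P}(\max_i X_i \leq x) \;=\; F(x)^n \;=\; \exp\bigl(-n\,e^{-(x-\mu)/\beta}\bigr) \;=\; \exp\!\left(-e^{-(x-\mu-\beta\log n)/\beta}\right),
\end{equation*}
which is the CDF of $\mathcal{G}(\mu+\beta\log n,\beta)$. So the Gumbel family is closed under taking maxima up to a deterministic location shift, which is the sense of max-stability used here.

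For the convergence claim, I would work from the observation that $M_n = \max_i X_i$ has CDF $F^n$ and search for normalizing constants $b_n \in \R$, $\beta>0$ such that $F^n(b_n + \beta y)$ converges pointwise to a non-degenerate limit $G(y)$. The natural choice is to pick $b_n$ as an asymptotic solution of $n(1-F(b_n)) = 1$; this choice is well-defined precisely because $F$ has an exponential tail, i.e.\ $1-F(x) \sim c\,e^{-x/\beta}$ as $x \to \infty$ for some $c,\beta>0$. Plugging in, the same tail asymptotic yields $n(1-F(b_n + \beta y)) \to e^{-y}$. Then I would combine the identity $F^n(x) = \exp\bigl(n \log(1 - (1-F(x)))\bigr)$ with the expansion $\log(1-u) = -u + o(u)$ as $u \to 0$ to conclude
\begin{equation*}
F^n(b_n + \beta y) \;\longrightarrow\; \exp(-e^{-y}),
\end{equation*}
which is the standard Gumbel CDF, giving the claim.

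The main obstacle is giving a precise meaning to ``exponential tail'' that is strong enough to make the expansion above rigorous, since the naive tail condition only controls $1-F$ up to lower-order corrections. I would handle this by appealing to the classical Fisher--Tippett--Gnedenko characterization: a distribution lies in the Gumbel maximum domain of attraction iff it satisfies a von Mises-type condition on the hazard rate, equivalently $\lim_{x \to x^*} \frac{d}{dx}\bigl(\tfrac{1-F(x)}{f(x)}\bigr) = 0$ where $x^* = \sup\{x : F(x)<1\}$. Under this condition the needed uniform control on $n(1-F(b_n+\beta y))$ follows from standard regularly varying / slowly varying function arguments. Since the paper only needs the qualitative statement, I would cite Fisher--Tippett--Gnedenko for the limit half and present the max-stability computation in full, as it is self-contained and directly used in the subsequent Gumbel-regression derivation.
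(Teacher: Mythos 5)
The paper does not actually prove this statement anywhere: it is presented as a classical result and attributed to \citet{mood1950introduction} and \citet{fisher_tippett_1928}, so there is no in-paper proof to compare against. Your sketch is, however, a correct rendering of the standard argument. The max-stability computation $F(x)^n = \exp\bigl(-e^{-(x-\mu-\beta\log n)/\beta}\bigr)$ is exactly right and is the fact the paper actually uses downstream (it underlies the Gumbel-Max trick and the error-propagation argument in Appendix A), and you correctly observe that closure holds only up to a deterministic location shift $\beta\log n$ --- a nuance the paper's phrasing ``$\max_i(X_i)\sim\mathcal{G}$'' glosses over. For the convergence half, your choice of $b_n$ via $n(1-F(b_n))=1$ together with $F^n = \exp\bigl(n\log(1-(1-F))\bigr)$ and the first-order expansion of the logarithm is the textbook route, and you are right that the informal phrase ``exponential tails'' must be upgraded to a von Mises / domain-of-attraction condition to make the uniform tail control rigorous; deferring that to Fisher--Tippett--Gnedenko is appropriate given the paper itself only needs the qualitative statement. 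One point worth making explicit: as literally written, $\lim_{n\to\infty}\max_i X_i$ without affine normalization is degenerate (it tends to the upper endpoint of the support), so your introduction of the normalizing constants $b_n$ and scale $\beta$ is not merely a technicality but a necessary correction to the theorem statement; the convergence is of $(M_n-b_n)/\beta$ in distribution, not of $M_n$ itself.
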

This result is similar to the Central Limit Theorem (CLT), which states that means of i.i.d. errors
approach the normal distribution.
Thus, under a chain of max operations, any i.i.d. exponential tailed
errors\footnote{Bounded random variables are sub-Gaussian~\citep{subgaussian} which have exponential tails.}
will tend to become Gumbel distributed and stay as such.
 EVT will ultimately suggest us 
to characterize nested errors in Q-learning as following a Gumbel distribution. In particular, the Gumbel distribution $\mathcal{G}$  exhibits unique properties we will exploit. 

One intriguing consequence of the Gumbel's max-stability is its ability to convert the maximum over a discrete set into a softmax. 
This is known as the \textit{Gumbel-Max Trick} \citep{papandreou2010gaussian, hazan2012partition}. Concretely for  i.i.d. $\epsilon_i \sim \mathcal{G}(0, \beta)$ added to a set $\{x_1, ..., x_n\} \in \mathbb{R}$, $\max_i (x_i + \epsilon_i) \sim \mathcal{G}(\beta \log \sum_i \exp{(x_i/\beta)}, \beta)$, and $\argmax (x_i + \epsilon_i) \sim \softmax(x_i/\beta)$. Furthermore, the Max-trick is unique to the Gumbel~\citep{LUCE1977215}.
These properties lead into the McFadden-Rust model~\citep{McFadden1972ConditionalLA, econ_gumbel} of MDPs as we state below.

\textbf{McFadden-Rust model:} 
An MDP following the standard Bellman equations with stochasticity in the rewards due to unobserved state variables will satisfy the soft-Bellman equations over the observed state with actual rewards $\bar{r}(\s, \a)$, given two conditions:
\begin{enumerate}
\vspace{-0.05in}
    \item Additive separability (AS): observed rewards have additive i.i.d. Gumbel noise, i.e. $r(\s, \a) = \bar{r}(\s, \a) + \epsilon(\s,\a)$, with  actual rewards $\bar{r}(\s,\a)$ and i.i.d. noise $\epsilon(\s,\a) \sim \mathcal{G}(0, \beta)$. 
    \item Conditional Independence (CI): the noise $\epsilon(\s, \a)$  in a given state-action pair is conditionally independent of that in any other state-action pair.
\vspace{-0.05in}
\end{enumerate}

\looseness=-1
Moreover, the converse also holds: Any MDP satisfying the Bellman equations and following a $\softmax$ policy, necessarily has any i.i.d. noise in the rewards with $AS+CI$ conditions be Gumbel distributed.
These results were first shown to hold in discrete choice theory by~\citet{McFadden1972ConditionalLA}, with the $AS+CI$ conditions derived by~\citet{econ_gumbel} for discrete MDPs. We formalize these results in Appendix \ref{app:gem} and give succinct proofs using the developed properties of the Gumbel distribution. 
These results enable the view of a soft-MDP as an MDP with hidden i.i.d. Gumbel noise in the rewards. 
Notably, this result gives a different interpretation of a soft-MDP than entropy regularization to allow us to recover the soft-Bellman equations. \looseness=-1
\section{Extreme Q-Learning}
In this section, we motivate our Extreme Q-learning framework, which directly models the soft-optimal values $V^*$, and show it naturally extends soft-Q learning. Notably, we use the Gumbel distribution to derive a new optimization framework for RL 
via maximum-likelihood estimation and apply it to both online and offline settings.

\begin{wrapfigure}{R}{0.42\textwidth}
\vspace{-36pt}
\begin{center}
\includegraphics[width=0.4\textwidth]{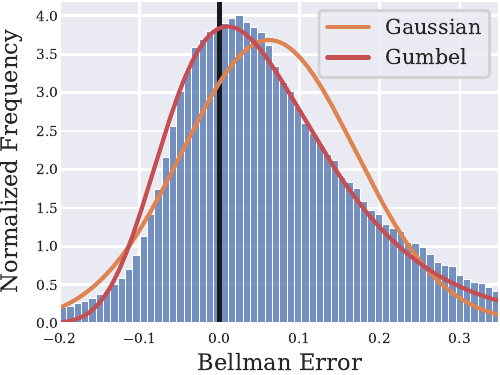}
\end{center}
\vspace{-12pt}
\caption{\small{Bellman errors from SAC on Cheetah-Run \citep{tassa2018deepmind}. The Gumbel distribution better captures the skew versus the Gaussian. Plots for TD3 and more environments can be found in Appendix \ref{app:experiments}.}}
\label{fig:errors}
\vspace{-12pt}
\end{wrapfigure}

\subsection{Gumbel Error Model}
\label{sec:gumbel_err}
\looseness=-1
Although assuming Gumbel errors in MDPs leads to intriguing properties, it is not obvious why the errors might be distributed as such. First, we empirically investigate the distribution of Bellman errors by computing them over the course of training. Specifically, we compute $r(\s,\a) - \gamma Q(\s', \pi(\s')) - Q(\s,\a)$ for samples $(\s,\a,\s')$ from the replay-buffer using a single $Q$-function from SAC \citep{haarnoja2018soft} (See Appendix \ref{app:experiments} for more details). In Figure~\ref{fig:errors}, we find the errors to be skewed and better fit by a Gumbel distribution. We explain this using EVT.

Consider fitting $Q$-functions by learning an unbiased function approximator $\hat{Q}$ to solve the Bellman equation.  We will assume access to $M$ such function approximators, each of which are assumed to be independent e.g. parallel runs of a model over an experiment.
We can see approximate Q-iteration as performing:
\begin{equation}
\label{eq:q_iter_err}
     \hat{Q}_t(\s,\a) = \bar{Q}_t(\s,\a) + \epsilon_t(\s,\a),
\end{equation}
where $ \E[\hat{Q}] =\bar{Q}_t$ is the expected value of our prediction $\hat{Q_t}$ for an intended target $\bar{Q}_t$ over our estimators, and $\epsilon_t$ is the (zero-centered) error in our estimate. Here, we assume the error $\epsilon_t$ comes from the same underlying distribution for each of our estimators, and thus are i.i.d. random variables with a zero-mean. Now, consider the bootstrapped estimate using one of our M estimators chosen randomly:\looseness=-1
\begin{equation}
\label{eq:q_err}
    \mathcal{\hat{B^*}} \hat{Q}_t (\s,\a) = r(\s,\a) + \gamma \max_{\a'} \hat{Q}_t(\s',\a') = r(\s,\a) + \gamma \max_{\a'} (\bar{Q}_t(\s',\a') + \epsilon_t(\s',\a')).
\end{equation}
We now examine what happens after a subsequent update. At time $t+1$, suppose that we fit a fresh set of $M$ independent functional approximators $\hat{Q}_{t+1}$
with the target $\mathcal{\hat{B^*}} \hat{Q}_t$, introducing a new unbiased error $\epsilon_{t+1}$. Then,
for $\bar{Q}_{t+1} = \E[\hat{Q}_{t+1}]$ it holds that   
\begin{align}
\label{eq:q_iter_err2}
    \bar{Q}_{t+1}(\s,\a) = r(\s,\a) + \gamma \E_{\s'|\s,\a}[\E_{\epsilon_t}[\max_{\a'} (\bar{Q}_t(\s',\a') + \epsilon_t(\s',\a'))]].
\end{align}
\looseness=-1
As $\bar{Q}_{t+1}$ is an expectation over  both the dynamics and the functional errors, it accounts for all  uncertainty (here $\E[\epsilon_{t+1}]=0$). But, the  i.i.d. error $\epsilon_t$ remains and will be propagated through the Bellman equations and its chain of $\max$ operations. Due to Theorem~\ref{lemma:evt}, $\epsilon_t$ will become Gumbel distributed in the limit of $t$, and remain so due to the Gumbel distribution's max-stability.\footnote{The same holds for soft-MDPs as log-sum-exp can be expanded as a max over i.i.d. Gumbel random vars.}

This highlights a fundamental issue with approximation-based RL algorithms that minimize the Mean-Squared Error (MSE) in the Bellman Equation: they implicitly assume, via maximum likelihood estimation, that errors are Gaussian. 
In Appendix \ref{app:gem}, we further study the propagation of errors using the McFadden-Rust MDP model, and use it to develop a simplified Gumbel Error Model (GEM) for errors under functional approximation.  
In practice, the Gumbel nature of the errors may be weakened as estimators between timesteps share parameters and errors will be correlated across states and actions. \looseness=-1
\subsection{Gumbel Regression}
\begin{figure}[t]
\centering
\includegraphics[width=\textwidth]{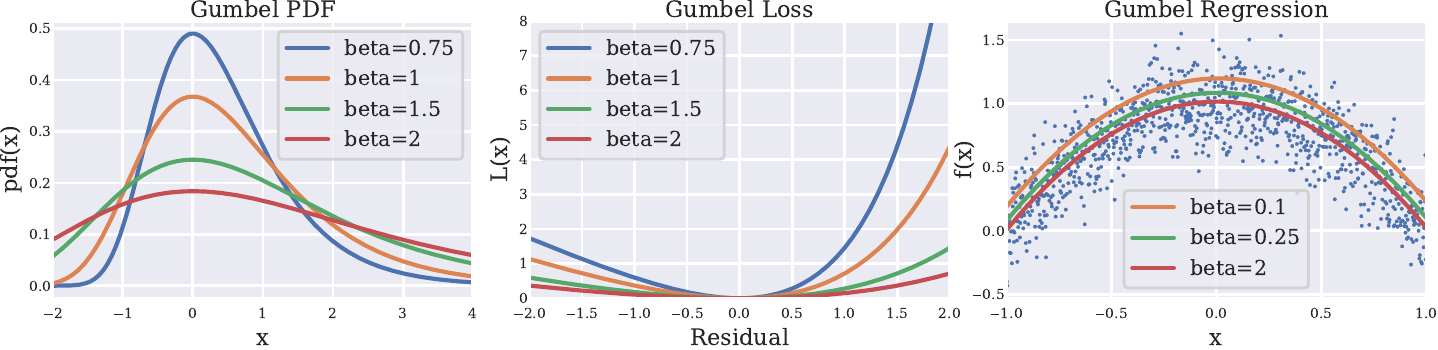}
\vspace{-0.275in}
\caption{\small \textbf{Left}: The pdf of the Gumbel distribution with $\mu = 0$ and different values of $\beta$. \textbf{Center}: Our Gumbel loss for different values of $\beta$. \textbf{Right}: Gumbel regression applied to a two-dimensional random variable for different values of $\beta$. The smaller the value of $\beta$, the more the regression fits the extrema. }
\label{fig:method}
\vspace{-17pt}
\end{figure}



The goal of our work is to directly model the log-partition function (LogSumExp) over $Q(s,a)$ to avoid all of the aforementioned issues with taking a $\max$ in the function approximation domain. In this section we derive an objective function that models the LogSumExp by simply assuming errors follow a Gumbel distribution. 
Consider estimating a parameter ${h}$ for a random variable $X$ using samples $x_i$ from a dataset $\mathcal{D}$, which have Gumbel distributed noise, i.e. $x_i = h + \epsilon_i$ where $\epsilon_i  \sim -\mathcal{G}(0, \beta)$. Then, the average log-likelihood of the dataset $\mathcal{D}$ as a function of $h$ is given as:
\begin{equation}
\mathbb{E}_{x_i \sim \mathcal{D}} \left[\log p(x_i) \right] = \mathbb{E}_{x_i \sim \mathcal{D}} \left[-e^{((x_i - h)/\beta)} + (x_i - h)/\beta\right]
\end{equation}
Maximizing the log-likelihood yields the following convex minimization objective in $h$,
 \begin{equation}
 \mathcal{L}(h) = \mathbb{E}_{x_i \sim \mathcal{D}} \left[e^{(x_i - h)/\beta}  - (x_i - h)/\beta - 1 \right]
\end{equation}
which forms our objective function $\mathcal{L(\cdot)}$, which resembles the Linex loss from econometrics \citep{parsian2002estimation} \footnote{We add $-1$ to make the loss 0 for a perfect fit, as $e^x - x - 1 \geq 0$ with equality at $x=0$.}. $\beta$ is fixed as a hyper-parameter, and we show its affect on the loss in Figure \ref{fig:method}. Critically, the minima of this objective under a fixed $\beta$ is given by $h = \beta \log \mathbb{E}_{x_i \sim \D}[e^{x_i/\beta}]$, which resembles the LogSumExp with the summation replaced with an (empirical) expectation.
In fact, this solution is the the same as the operator $\mathbb{L}^\beta_\mu(X)$ defined for MaxEnt in Section \ref{sec:maxent} with  $x_i$ sampled from $\mu$. In Figure~\ref{fig:method}, we show plots of Gumbel Regression on a simple dataset with different values of $\beta$. As this objective recovers $\mathbb{L}^\beta(X)$, we next use it to model soft-values in Max-Ent RL.
\looseness=-1



\subsubsection{Theory}


Here we show that Gumbel regression is well behaved, considering the previously defined operator $\mathbb{L}^\beta$ for random variables $\mathbb{L}^\beta(X) \coloneqq \beta \log  \mathbb{E}\left[e^{X / \beta}\right]$. First, we show it models the extremum. \looseness=-1

\begin{lemma}
\label{lemma:1}
For any $\beta_1 > \beta_2$, we have $\mathbb{L}^{\beta_1}(X) < \mathbb{L}^{\beta_2}(X)$. And $\mathbb{L}^\infty(X) = \mathbb{E}\left[X\right]$, $\mathbb{L}^0(X) = sup(X)$. Thus, for any $\beta \in (0, \infty)$, the operator $\mathbb{L}^\beta(X)$ is a measure that interpolates between the expectation and the max of $X$.
\end{lemma}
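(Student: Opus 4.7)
The plan is to verify the three claims (strict monotonicity, the $\beta\to\infty$ limit, and the $\beta\to 0$ limit) in turn, treating $\mathbb{L}^\beta(X)=\beta\log\mathbb{E}[e^{X/\beta}]$ as the log-moment-generating function evaluated at $1/\beta$.

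\textbf{Step 1: Monotonicity via Jensen.} For $\beta_1>\beta_2>0$, set $p=\beta_1/\beta_2>1$. Applying Jensen's inequality to the strictly convex map $y\mapsto y^p$ and to the nonnegative random variable $Y=e^{X/\beta_1}$ yields
\begin{equation*}
\bigl(\mathbb{E}[e^{X/\beta_1}]\bigr)^p \;\le\; \mathbb{E}\bigl[e^{pX/\beta_1}\bigr] \;=\; \mathbb{E}\bigl[e^{X/\beta_2}\bigr],
\end{equation*}
with strict inequality unless $X$ is almost-surely constant. Taking logarithms and multiplying by $\beta_2$ converts this into $\beta_1\log\mathbb{E}[e^{X/\beta_1}]\le\beta_2\log\mathbb{E}[e^{X/\beta_2}]$, i.e.\ $\mathbb{L}^{\beta_1}(X)\le\mathbb{L}^{\beta_2}(X)$, strict when $X$ is non-degenerate.

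\textbf{Step 2: The limit $\beta\to\infty$.} I would Taylor-expand $e^{X/\beta}=1+X/\beta+O(1/\beta^2)$, take expectations to get $\mathbb{E}[e^{X/\beta}]=1+\mathbb{E}[X]/\beta+O(1/\beta^2)$, apply $\log(1+u)=u+O(u^2)$, and multiply by $\beta$ to conclude $\mathbb{L}^\beta(X)\to\mathbb{E}[X]$. Rigor here just needs a dominated-convergence style justification when $X$ is unbounded; for the bounded $Q$-value setting of the paper this is immediate.

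\textbf{Step 3: The limit $\beta\to 0$.} Let $M=\operatorname{ess\,sup}(X)$. The upper bound $\mathbb{L}^\beta(X)\le M$ follows from $e^{X/\beta}\le e^{M/\beta}$. For the matching lower bound, fix $\varepsilon>0$ and let $q_\varepsilon=\Pr(X>M-\varepsilon)>0$. Then
\begin{equation*}
\mathbb{E}[e^{X/\beta}] \;\ge\; q_\varepsilon\, e^{(M-\varepsilon)/\beta},
\qquad\text{so}\qquad
\mathbb{L}^\beta(X) \;\ge\; M-\varepsilon + \beta\log q_\varepsilon.
\end{equation*}
Sending $\beta\to 0$ kills the last term, and then $\varepsilon\to 0$ gives $\liminf_{\beta\to 0}\mathbb{L}^\beta(X)\ge M$. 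Combined with Step 1 (which implies monotone convergence in $\beta$), this pins down the limit as $\sup X$.

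The main technical subtlety is Step 3: the argument requires treating ``$\sup$'' as the essential supremum and ensuring $q_\varepsilon>0$ for every $\varepsilon>0$, which is automatic from the definition. Monotonicity from Step 1 guarantees the limits in Steps 2 and 3 are well defined, so the three pieces together establish that $\beta\mapsto\mathbb{L}^\beta(X)$ is strictly decreasing and continuously interpolates between $\mathbb{E}[X]$ and $\sup X$.
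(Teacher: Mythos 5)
Your proof is correct, but there is nothing in the paper to compare it against: the paper states this lemma in Section 3.2.1 without proof, justifying it only by the remark that $\mathbb{L}^\beta$ is the cumulant-generating function (log-Laplace transform) tied to the entropic value at risk, and the appendix on Gumbel regression proves only the adjacent results (the overestimation-bias bound and the PAC bounds), not the monotonicity or the two limits. Your three steps supply the omitted details in the standard way: Step 1 is exactly Lyapunov's inequality --- monotonicity of the $L^p$ norm of $e^{X}$ in $p=1/\beta$ --- derived via Jensen applied to $y\mapsto y^{p}$; Step 2 is the usual first-order expansion of the cumulant-generating function; Step 3 is the textbook essential-supremum argument, and you correctly identify the one real subtlety there (that $\sup$ must be read as $\operatorname{ess\,sup}$ so that $q_\varepsilon>0$). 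Two caveats are worth retaining if this were to be inserted as a proof: the strict inequality $\mathbb{L}^{\beta_1}(X)<\mathbb{L}^{\beta_2}(X)$ claimed in the lemma holds only for non-degenerate $X$ (you flag this), and Step 2 needs the dominated-convergence justification you mention for unbounded $X$, which is vacuous in the paper's bounded-$Q$ setting.
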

The operator $\mathbb{L}^\beta(X)$ is known as the  cumulant-generating function or the log-Laplace transform, and is a measure of the tail-risk closely linked to the entropic value at risk (EVaR)~\citep{evar} . 
%

 
\begin{lemma}
\label{lemma:2}
The risk measure $\mathcal{L}$ has a unique minima at $ \beta \log  \mathbb{E}\left[e^{X / \beta}\right]$.
And an empirical risk $\mathcal{\hat{L}}$ is an unbiased estimate of the true risk.
Furthermore, for $\beta \gg 1$, $\mathcal{L}(\theta) \approx \frac{1}{2 \beta^2} \mathbb{E}_{x_i \sim \mathcal{D}} [(x_i - \theta)^2]$, thus behaving as the MSE loss with errors $\sim \mathcal{N}(0, \beta)$.
\end{lemma}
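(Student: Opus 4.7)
The plan is to establish the three parts of the lemma in turn: (i) the unique minimizer of $\mathcal{L}$, (ii) unbiasedness of the empirical risk, and (iii) the Gaussian/MSE limit as $\beta \to \infty$. None of the three requires heavy machinery; the work is mostly careful computation, and the only place where one has to be slightly delicate is justifying the asymptotic expansion in (iii).

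For part (i), I would differentiate $\mathcal{L}(h) = \mathbb{E}_{x_i \sim \mathcal{D}}\bigl[e^{(x_i - h)/\beta} - (x_i - h)/\beta - 1\bigr]$ under the expectation (assuming the usual dominated-convergence regularity so that differentiation and integration commute). This gives
\begin{equation*}
\mathcal{L}'(h) = \tfrac{1}{\beta}\,\bigl(1 - \mathbb{E}[e^{(x_i - h)/\beta}]\bigr), \qquad \mathcal{L}''(h) = \tfrac{1}{\beta^2}\,\mathbb{E}[e^{(x_i - h)/\beta}] > 0.
\end{equation*}
Setting $\mathcal{L}'(h) = 0$ yields $\mathbb{E}[e^{x_i/\beta}] = e^{h/\beta}$, i.e.\ $h^* = \beta \log \mathbb{E}[e^{x_i/\beta}] = \mathbb{L}^\beta(X)$. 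Strict positivity of $\mathcal{L}''$ makes $\mathcal{L}$ strictly convex, so $h^*$ is the unique minimizer. This is consistent with Lemma~\ref{lemma:1}, which identifies $\mathbb{L}^\beta$ as the target operator we want Gumbel regression to estimate.

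For part (ii), given i.i.d.\ samples $x_1,\dots,x_N \sim \mathcal{D}$ and the empirical risk $\hat{\mathcal{L}}(h) = \frac{1}{N}\sum_{i=1}^N \bigl[e^{(x_i - h)/\beta} - (x_i - h)/\beta - 1\bigr]$, linearity of expectation immediately gives $\mathbb{E}[\hat{\mathcal{L}}(h)] = \mathcal{L}(h)$ for each fixed $h$, since each summand has the same distribution as the integrand in $\mathcal{L}$. This is a one-line observation.

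For part (iii), I would Taylor-expand the exponential around $0$: writing $u_i \coloneqq (x_i - h)/\beta$, we have $e^{u_i} = 1 + u_i + \tfrac{1}{2}u_i^2 + O(u_i^3)$, and for $\beta$ large (with $x_i - h$ assumed to lie in a bounded range, which is the right regime since we are comparing to MSE) the cubic and higher terms are uniformly $O(\beta^{-3})$. Substituting into $\mathcal{L}$ the zeroth- and first-order terms cancel exactly against the $-u_i - 1$ piece of the loss, leaving
\begin{equation*}
\mathcal{L}(h) = \mathbb{E}_{x_i \sim \mathcal{D}}\!\left[\tfrac{1}{2}u_i^2 + O(u_i^3)\right] = \tfrac{1}{2\beta^2}\,\mathbb{E}_{x_i \sim \mathcal{D}}[(x_i - h)^2] + O(\beta^{-3}).
\end{equation*}
Thus up to the $1/(2\beta^2)$ rescaling, $\mathcal{L}$ coincides with the MSE, which is the negative log-likelihood (up to constants) of a Gaussian observation model with scale $\beta$, completing the correspondence. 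The mildly subtle point here—and the only place deserving a sentence of care—is the uniform control of the remainder, which I would handle by assuming $X$ has bounded support (or at least finite exponential moments) so that $u_i$ is $O(1/\beta)$ in the relevant sense.
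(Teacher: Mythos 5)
Your proposal is correct, and it matches the argument the paper implicitly relies on: the paper never writes out a proof of this lemma, merely asserting the minimizer $h = \beta\log\mathbb{E}[e^{x_i/\beta}]$ in the text after introducing the loss, and your first-order condition plus strict convexity, the one-line unbiasedness observation, and the second-order Taylor expansion $e^{u}-u-1 = \tfrac{1}{2}u^{2}+O(u^{3})$ are exactly the standard computations that justify each claim. Your added care about the remainder term in part (iii) (bounded support or finite exponential moments so that $u_i = O(1/\beta)$ uniformly) is a reasonable regularity condition that the paper leaves unstated.
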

\looseness=-1
In particular, the empirical loss $\mathcal{\hat{L}}$ over a dataset of $N$ samples can be minimized using stochastic gradient-descent (SGD) methods to give an unbiased estimate of the LogSumExp over the $N$ samples.
\looseness=-1

\begin{lemma}
\label{lemma:log_partiton}
$\mathbb{\hat{L}}^\beta(X)$ over a finite $N$ samples is a consistent estimator of the log-partition function $\mathbb{L}^\beta(X)$. Similarly, $\exp({\mathbb{\hat{L}}^\beta(X)/\beta})$ is an unbiased estimator for the partition function $Z = \E\left[e^{X / \beta}\right]$
\end{lemma}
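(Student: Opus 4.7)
The plan is to unwind the definition of $\hat{\mathbb{L}}^\beta$ implied by Lemma~\ref{lemma:2} and then invoke two classical facts: the strong law of large numbers for consistency, and linearity of expectation for unbiasedness. Concretely, Lemma~\ref{lemma:2} tells us that the empirical Gumbel loss $\hat{\mathcal{L}}$ over $N$ i.i.d.\ samples $x_1,\dots,x_N$ is uniquely minimized at $h^* = \beta \log \tfrac{1}{N}\sum_{i=1}^N e^{x_i/\beta}$, so the natural definition of the empirical operator is
\begin{equation*}
\hat{\mathbb{L}}^\beta(X) \;=\; \beta \log \Bigl(\tfrac{1}{N}\sum_{i=1}^N e^{x_i/\beta}\Bigr).
\end{equation*}
The whole statement follows from manipulating this closed form.

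For the consistency claim, I would first observe that each $e^{x_i/\beta}$ is i.i.d.\ with finite mean $Z = \mathbb{E}[e^{X/\beta}]$ (this finiteness is exactly the hypothesis required for $\mathbb{L}^\beta(X)$ to be well-defined). By the strong law of large numbers, $\tfrac{1}{N}\sum_i e^{x_i/\beta} \to Z$ almost surely. Since $\log$ is continuous on $(0,\infty)$ and $Z>0$, the continuous mapping theorem gives $\hat{\mathbb{L}}^\beta(X) = \beta \log \tfrac{1}{N}\sum_i e^{x_i/\beta} \to \beta \log Z = \mathbb{L}^\beta(X)$ almost surely, hence in probability, establishing consistency.

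For the unbiasedness claim, the exponential neatly cancels the logarithm:
\begin{equation*}
\exp\!\bigl(\hat{\mathbb{L}}^\beta(X)/\beta\bigr) \;=\; \exp\!\Bigl(\log \tfrac{1}{N}\sum_{i=1}^N e^{x_i/\beta}\Bigr) \;=\; \tfrac{1}{N}\sum_{i=1}^N e^{x_i/\beta}.
\end{equation*}
Taking expectations and using linearity plus the i.i.d.\ assumption yields $\mathbb{E}\bigl[\exp(\hat{\mathbb{L}}^\beta(X)/\beta)\bigr] = \mathbb{E}[e^{X/\beta}] = Z$, which is exactly unbiasedness.

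There is no genuinely hard step here; the only subtlety worth flagging is that $\hat{\mathbb{L}}^\beta$ itself is a \emph{biased} estimator of $\mathbb{L}^\beta(X)$ for finite $N$ (Jensen's inequality applied to the concave $\log$ gives $\mathbb{E}[\hat{\mathbb{L}}^\beta(X)] \le \mathbb{L}^\beta(X)$), so unbiasedness only survives on the partition-function scale and not on the log-partition scale. I would state this as a short remark so the reader does not conflate the two claims. Everything else is immediate from SLLN and continuity, so the proof should be a few lines.
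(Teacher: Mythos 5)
Your proof is correct, but it takes a different route from the paper's. You establish consistency via the strong law of large numbers applied to the i.i.d.\ variables $e^{x_i/\beta}$ followed by the continuous mapping theorem, which is the most elementary argument available and needs only that $Z = \E[e^{X/\beta}]$ is finite. The paper instead proves a quantitative, finite-sample version (Lemma~\ref{lemma:partition} in Appendix~\ref{app:gumbel}): it assumes $X$ is bounded in $[-X_{max}, X_{max}]$, applies Hoeffding's inequality to the sum $\sum_i e^{X_i/\beta}$, and derives an explicit high-probability deviation bound of order $\sinh(X_{max}/\beta)\sqrt{2\log(1/\delta)/N}$ on the partition-function scale, then transfers it to the log scale via $\log(1+x)\le x$. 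Consistency is then a corollary of the bound vanishing as $N\to\infty$. What your approach buys is generality (no boundedness needed) and brevity; what the paper's approach buys is the explicit PAC-style rate that the main text advertises immediately after the lemma statement, including the dependence on $\beta$ and $X_{max}$ that the authors use to argue the bound tightens as $\beta$ grows. Your treatment of unbiasedness is identical to the paper's (the exponential cancels the log, leaving a sample mean), and your closing remark that $\hat{\mathbb{L}}^\beta$ is itself biased downward on the log scale by Jensen is a worthwhile observation that the paper makes only implicitly (via the one-sided nature of its bounds and the separate overestimation analysis in Lemma~\ref{lemma:3}).
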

We provide PAC learning bounds for Lemma~\ref{lemma:log_partiton}, and further theoretical discussion on Gumbel Regression in Appendix \ref{app:gumbel}.

\subsection{MaxEnt RL without Entropy}
\label{sec:max_ent_without_ent}
Given Gumbel Regression can be used to directly model the LogSumExp 
, we apply it to Q-learning. First, we connect our framework to conservative Q-learning~\citep{kumar2020conservative}.

\begin{lemma}
\label{lemma:update_q}
Consider the loss objective over Q-functions:
\begin{equation}
\label{eq:vanilla_q_update}
\mathcal{L}(Q) =  \mathbb{E}_{\s \sim \rho_\mu, \a \sim \mathcal{\mu}(\cdot| \s)}\left[e^{({\mathcal{T}}^{\pi} \hat{Q}^{k}(\s, \a) - Q(\s, \a))/\beta}\right] - \mathbb{E}_{\s \sim \rho_\mu, \a \sim \mu(\cdot \mid \s)}[({\mathcal{T}}^{\pi} \hat{Q}^{k}(\s, \a) - Q(\s, \a))/\beta] - 1
\end{equation}
where $\mathcal{{T}}^{\pi} \defeq r(\s, \a) + \gamma \E_{\s'|\s , \a}\E_{\a'\sim\pi}[Q(\s', \a')]$ is the vanilla Bellman operator under the policy  $\pi(\a|\s)$. Then minimizing $\mathcal{L}$ gives the update rule: 
$$
\forall \mathbf{s}, \mathbf{a}, k \ \ \hat{Q}^{k+1}(\mathbf{s}, \mathbf{a})=\mathcal{{T}}^{\pi} \hat{Q}^{k}(\mathbf{s}, \mathbf{a})-\beta\log\frac{\pi(\mathbf{a} \mid \mathbf{s})}{\mu(\mathbf{a} \mid \mathbf{s})} = \mathcal{B}^{\pi}\hat{Q}^k(\s, \a).
$$
\end{lemma}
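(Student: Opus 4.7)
The plan is to identify the minimizer of $\mathcal{L}(Q)$ pointwise in $Q(\s,\a)$. The integrand $u \mapsto e^{u/\beta} - u/\beta - 1$ is strictly convex in $u$ with unique zero at $u=0$ (this is the Gumbel loss from the preceding subsection, whose uniqueness of minimum is Lemma~\ref{lemma:2}). Consequently $\mathcal{L}$ is strictly convex in the values of $Q$ on the support of the sampling measure $\rho_\mu(\s)\mu(\a|\s)$, so the unique minimizer is characterized by the pointwise first-order optimality condition.

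The first step is to compute $\partial_{Q(\s,\a)}\mathcal{L}(Q)$. The exponential term contributes $-\rho_\mu(\s)\mu(\a|\s)\beta^{-1} e^{(\mathcal{T}^\pi \hat{Q}^k(\s,\a) - Q(\s,\a))/\beta}$. The linear term, viewed as the Fenchel-dual pairing against a policy-$\pi$ Bellman target evaluated on $\mu$-sampled transitions, carries an effective weight $+\rho_\mu(\s)\pi(\a|\s)\beta^{-1}$ after applying the change of measure $\E_{\a\sim\mu}[f] = \E_{\a\sim\pi}[(\mu/\pi)\,f]$ to align the reference measure of the linear piece with that of the target operator.

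Setting the derivative to zero then yields the pointwise stationarity condition
\begin{equation*}
\mu(\a|\s)\,e^{(\mathcal{T}^\pi \hat{Q}^k(\s,\a) - Q(\s,\a))/\beta} \;=\; \pi(\a|\s),
\end{equation*}
and taking logarithms and solving for $Q$ gives
\begin{equation*}
\hat{Q}^{k+1}(\s,\a) \;=\; \mathcal{T}^\pi \hat{Q}^k(\s,\a) - \beta\log\frac{\pi(\a|\s)}{\mu(\a|\s)} \;=\; \mathcal{B}^\pi \hat{Q}^k(\s,\a),
\end{equation*}
which is the claim. Strict convexity of $\mathcal{L}$ confirms that this stationary point is the unique global minimizer, so there is no further verification needed beyond uniqueness.

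The main obstacle is making the change-of-measure step rigorous, since the loss as written places $\mu$ as the action distribution in both expectations and a naive pointwise minimization of $e^{(T-Q)/\beta} - (T-Q)/\beta - 1$ under a single reference would collapse to $Q = \mathcal{T}^\pi \hat{Q}^k$ and miss the $-\beta\log(\pi/\mu)$ correction. The cleanest remedy is to invoke the Gumbel-regression identity $\arg\min_h \E_\nu[e^{(x-h)/\beta} - (x-h)/\beta - 1] = \beta\log \E_\nu[e^{x/\beta}]$ with the understanding that the reference measure $\nu$ for the \emph{target} is $\pi$ while samples are drawn from $\mu$; the mismatch between these two measures is then absorbed into the minimizer as precisely the log-ratio $\beta\log(\pi/\mu)$, which is exactly the KL-style correction that converts the vanilla Bellman backup $\mathcal{T}^\pi$ into the reward-shifted operator $\mathcal{B}^\pi$ of the lemma.
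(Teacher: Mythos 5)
Your derivation reaches the right answer and, at its core, is the same argument the paper gives in Appendix~\ref{app:xql}: write the loss as the dual (Fenchel/Donsker-style) representation of $D_{KL}(\pi\,\|\,\mu)$, take the pointwise first-order condition of the strictly convex integrand, obtain $\mu(\a|\s)e^{(\mathcal{T}^\pi\hat{Q}^k-Q)/\beta}=\pi(\a|\s)$, and solve for $Q$. You have also correctly diagnosed the real issue: with \emph{both} expectations taken under $\a\sim\mu$, as Eq.~\ref{eq:vanilla_q_update} is printed, the stationarity condition is $\mu e^{(\mathcal{T}^\pi\hat{Q}^k-Q)/\beta}=\mu$, which collapses to $Q=\mathcal{T}^\pi\hat{Q}^k$ and does not produce the $-\beta\log(\pi/\mu)$ term. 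The paper's own appendix derivation in fact writes the linear term under $\a\sim\pi$, not $\a\sim\mu$; the main-text equation is inconsistent with the appendix, and only the appendix version yields the stated update rule.

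The gap is in how you try to bridge that discrepancy. The ``change of measure'' $\E_{\a\sim\mu}[f]=\E_{\a\sim\pi}[(\mu/\pi)f]$ is an identity: it leaves the value of the linear term, and hence its functional derivative in $Q(\s,\a)$, completely unchanged. It cannot convert an effective weight of $\rho_\mu(\s)\mu(\a|\s)/\beta$ into $\rho_\mu(\s)\pi(\a|\s)/\beta$; no reinterpretation of a $\mu$-weighted expectation as a ``Fenchel-dual pairing'' changes what its derivative is. As written, that step is not a proof but a relabeling that smuggles in the conclusion. The correct fix is simply to state that the linear term must be taken under $\a\sim\pi$ (equivalently, to prove the lemma for the loss as it appears in the appendix, noting the typo in the main text); once you do that, your pointwise computation is complete and matches the paper's. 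As a minor additional point, it is worth saying explicitly that $\mathcal{T}^\pi\hat{Q}^k$ is a fixed target (it depends on the previous iterate $\hat{Q}^k$, not on the optimization variable $Q$), so the pointwise argument is legitimate.
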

The above lemma transforms the regular Bellman backup into the soft-Bellman backup without the need for entropies, letting us convert standard RL into MaxEnt RL. Here, $\mathcal{L(\cdot)}$ does a conservative Q-update similar to CQL~\citep{kumar2020conservative} with the nice property that the implied conservative term is just the KL-constraint between $\pi$ and $\mu$.\footnote{In fact, theorems of CQL~\citep{kumar2020conservative} hold for our objective by replacing $D_{CQL}$ with $D_{KL}$.} This enforces a entropy-regularization on our policy with respect to the behavior policy without the need of entropy. Thus, soft-Q learning naturally emerges as a conservative update on regular Q-learning under our objective.  Here, Equation~\ref{eq:vanilla_q_update} is the dual of the KL-divergence between $\mu$ and $\pi$~\citep{garg2021iqlearn}, and we motivate this objective for RL and establish formal equivalence with conservative Q-learning in Appendix \ref{app:xql}.

In our framework, we use the MaxEnt Bellman operator $\mathcal{B}^*$ which gives our \emph{ExtremeQ} loss, which is the same as our Gumbel loss from the previous section:
\begin{equation}
\label{eq:q_update}
\mathcal{L}(Q) =  \mathbb{E}_{\s, \a \sim \mathcal{\mu}}\left[e^{(\hat{\mathcal{B}}^{*} \hat{Q}^{k}(\s, \a) - Q(\s, \a))/\beta}\right] - \mathbb{E}_{\s, \a \sim \mathcal{\mu}}[(\hat{\mathcal{B}}^{*} \hat{Q}^{k}(\s, \a) - Q(\s, \a))/\beta] - 1
\end{equation}

This gives an update rule:
$\hat{Q}^{k+1}(\mathbf{s}, \mathbf{a})=\mathcal{{B}}^{*} \hat{Q}^{k}(\mathbf{s}, \mathbf{a})$.
$\mathcal{L(\cdot)}$ here requires estimation of $\mathcal{B}^*$ which is very hard in continuous action spaces. Under deterministic dynamics, $\mathcal{L}$ can be obtained without $\mathcal{B}^*$ as shown in Appendix \ref{app:xql}. However, in general we still need to estimate $\mathcal{B}^*$. Next, we motivate how we can solve this issue. Consider the soft-Bellman equation from Section \ref{sec:maxent} (Equation \ref{eq:softbellman}),
\begin{equation}
\label{eq:gumbel_bellman}
\mathcal{B}^*Q = r(\s, \a) + \gamma \mathbb{E}_{\sn \sim P(\cdot | \s, \a)} [V^*(\sn)],
\end{equation}
where $V^*(\s) = \mathbb{L}^\beta_{\a \sim \mu(\cdot|\sn)} [Q(\s, \a)]$. Then $V^*$ can be directly estimated using Gumbel regression by setting the temperature $\beta$ to the regularization strength in the MaxEnt framework. This gives us the following \emph{ExtremeV} loss objective:
\begin{align}
\label{eq:v}
\mathcal{J}(V) =  \mathbb{E}_{\mathbf{s}, \mathbf{a} \sim \mathcal{\mu}}\left[e^{(\hat{Q}^{k}(\mathbf{s}, \mathbf{a}) - V(\mathbf{s}))/\beta}\right] - \ \mathbb{E}_{\mathbf{s}, \mathbf{a} \sim \mathcal{\mu}}[(\hat{Q}^{k}(\mathbf{s}, \mathbf{a}) - V(\mathbf{s}))/\beta] - 1.
\end{align}
\begin{lemma}
\label{lemma:update_v}
Minimizing $\mathcal{J}$ over values gives the update rule: $\hat{V}^{k}(\s) 
= \mathbb{L}^\beta_{\a \sim \mu(\cdot|\s)} [\hat{Q}^{k}(\s, \a)]$. 
\end{lemma}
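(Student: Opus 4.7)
The plan is to reduce the minimization of $\mathcal{J}$ to a family of independent, one-dimensional Gumbel regression problems indexed by the state $\s$, and then invoke the Gumbel regression theory already developed (in particular Lemma 1.2) to identify the pointwise minimizer.

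First, I would factor the outer expectation as
\begin{equation*}
\mathcal{J}(V) = \mathbb{E}_{\s \sim \rho_\mu}\!\left[\, \mathbb{E}_{\a \sim \mu(\cdot|\s)}\!\left[e^{(\hat{Q}^k(\s,\a) - V(\s))/\beta} - (\hat{Q}^k(\s,\a) - V(\s))/\beta - 1\right]\right].
\end{equation*}
Since $V(\s)$ depends only on $\s$, the inner bracket is precisely the Gumbel loss $\mathcal{L}(h)$ from the previous subsection, evaluated with $h = V(\s)$ and with the data distribution being $\hat{Q}^k(\s,\a)$ for $\a \sim \mu(\cdot|\s)$. Because $V$ is unconstrained across states (we are optimizing over all value functions), and because $\rho_\mu(\s) \ge 0$, minimizing $\mathcal{J}$ is equivalent to minimizing the inner loss pointwise at every $\s$ in the support of $\rho_\mu$.

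Next, I would apply Lemma 1.2 to this pointwise problem: the Gumbel loss is strictly convex in $h$, and its unique minimizer is
\begin{equation*}
h^*(\s) \;=\; \beta \log \mathbb{E}_{\a \sim \mu(\cdot|\s)}\!\left[e^{\hat{Q}^k(\s,\a)/\beta}\right] \;=\; \mathbb{L}^\beta_{\a \sim \mu(\cdot|\s)}\!\left[\hat{Q}^k(\s,\a)\right],
\end{equation*}
by the definition of the operator $\mathbb{L}^\beta$ introduced in Section 2.1. For completeness one can verify this directly by differentiating the inner expression with respect to $V(\s)$ and setting the derivative to zero, which yields $e^{-V(\s)/\beta}\mathbb{E}_{\a \sim \mu}[e^{\hat{Q}^k(\s,\a)/\beta}] = 1$, giving exactly the claimed form; strict convexity of $x \mapsto e^{-x/\beta} + x/\beta$ guarantees this critical point is the global minimum.

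I do not anticipate a serious obstacle here, since the result is essentially a restatement of Lemma 1.2 applied conditionally on $\s$. The only subtle point is the legitimacy of pointwise minimization: one must assume $V$ ranges over a function class rich enough to represent $h^*(\cdot)$ exactly (so that there is no coupling across states through a restricted parametrization), which is the standard assumption in the tabular/functional analysis of Bellman-style updates. Under this assumption the update rule $\hat{V}^k(\s) = \mathbb{L}^\beta_{\a \sim \mu(\cdot|\s)}[\hat{Q}^k(\s,\a)]$ follows immediately, which is exactly the soft-value targeted by the MaxEnt Bellman operator in Equation \ref{eq:softbellman}.
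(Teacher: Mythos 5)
Your proof is correct and follows essentially the same route the paper intends: the objective $\mathcal{J}$ decouples into an independent Gumbel-regression problem at each state, and the unique minimizer of that strictly convex scalar loss is $\beta \log \mathbb{E}_{\a \sim \mu(\cdot|\s)}[e^{\hat{Q}^k(\s,\a)/\beta}]$, exactly as given by the Gumbel regression lemma (your ``Lemma 1.2'' corresponds to the paper's Lemma on the unique minimum of the risk measure $\mathcal{L}$). The pointwise-minimization caveat you raise about the richness of the value function class is a reasonable and correctly handled technicality.
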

\looseness=-1
Then we can obtain $V^*$ from $Q(s, a)$ using Gumbel regression and substitute in Equation~\ref{eq:gumbel_bellman} to estimate the optimal bellman backup $\mathcal{B}^*Q$. Thus, Lemma~\ref{lemma:update_q} and~\ref{lemma:update_v} give us a scheme to solve the Max-Ent RL problem without the need of entropy.

\subsection{Learning Policies}
\label{sec:policies}
\looseness=-1
In the above section we derived a $Q$-learning strategy that does not require explicit use of a policy $\pi$. However, in continuous settings we still often want to recover a policy that can be run in the environment. Per Eq.~\ref{eq:pi} (Section \ref{sec:evt}), the optimal MaxEnt policy $\pi^*(\a | \s) = \mu(\a|\s) e^{(Q(\s, \a) - V(\s))/\beta}$. By minimizing the forward KL-divergence between $\pi$ and the optimal $\pi^*$ induced by $Q$ and $V$ we obtain the following training objective:
\begin{align}
\label{eq:policy_1}
    \pi^* = \argmax_\pi \mathbb{E}_{\rho_\mu(\s, \a)}[e^{(Q(\s, \a) - V(\s))/\beta} \log \pi].
\end{align}
If we take $\rho_\mu$ to be a dataset $\mathcal{D}$ generated from a behavior policy $\pi_\mathcal{D}$, we exactly recover the AWR objective used by prior works in Offline RL \citep{peng2019advantage, nair2020awac}, which can easily be computed using the offline dataset. This objective does not require sampling actions, which may potentially take $Q(s,a)$ out of distribution. Alternatively, if we want to sample from the policy instead of the reference distribution $\mu$, we can minimize the Reverse-KL divergence which gives us the SAC-like actor update:\looseness=-1
\begin{equation}
\label{eq:policy_2}
    \pi^* = \argmax_\pi \mathbb{E}_{\rho_\pi(\s)\pi(\a | \s)}[Q(\s, \a) - \beta \log (\pi(\a|\s)/\mu(\a|\s))].
\end{equation}
Interestingly, we note this doesn't depend on $V(s)$. If $\mu$ is chosen to be the last policy $\pi_k$, the second term becomes the KL-divergence between the current policy and $\pi_k$, performing a trust region update on $\pi$ \citep{schulman2015trust, vieillard2020leverage}.\footnote{Choosing $\mu$ to be uniform $\mathcal{U}$ gives the regular SAC update.} While estimating the log ratio $\log (\pi(\a|\s)/\mu(\a|\s))$ can be difficult depending on choice of $\mu$, our Gumbel Loss $\mathcal{J}$ removes the need for $\mu$ during $Q$ learning by estimating soft-$Q$ values of the form  $Q(\s,\a) - \beta \log (\pi(\a|\s)/\mu(\a|\s))$.



\subsection{Practical Algorithms}

\begin{wrapfigure}{R}{0.45\textwidth}
\vskip -25pt
\centering
\begin{minipage}[t]{.45\textwidth}
\begin{algorithm}[H]
    \small
    \caption{Extreme Q-learning (\X QL) (Under Stochastic Dynamics) }
    \label{alg:ALG1}
    \begin{algorithmic}[1]
    \State Init $Q_\phi$, $V_\theta$, and $\pi_\psi$
    \State Let $\mathcal{D} = \{(\s, \a, r, \sn) \}$ be data from $\pi_\mathcal{D}$ (offline) or replay buffer (online)
    \For {step $t$ in \{1...N\}}
        \State Train $Q_\phi$ using $\mathcal{L}(\phi)$ from Eq.~\ref{eq:q_objective}
        \State Train $V_\theta$ using $\mathcal{J}(\theta)$ from  Eq.~\ref{eq:v} \newline \hspace*{1.5em} (with $\a \sim \mathcal{D}$ (offline) or $\a \sim \pi_\psi$ (online))
        \State Update $\pi_\psi$ via Eq.~\ref{eq:policy_1} (offline) or Eq.~\ref{eq:policy_2} \newline \hspace*{1.5em} (online)

    \EndFor
    \end{algorithmic}
\end{algorithm}
\end{minipage}
\vskip-10pt
\end{wrapfigure}

In this section we develop a practical approach to Extreme Q-learning (\X QL) for both online and offline RL.  We consider parameterized functions $V_\theta(\s)$, $Q_\phi(\s,\a)$, and $\pi_\psi(\a|\s)$ and let $\mathcal{D}$ be the training data distribution. A core issue with directly optimizing Eq.~\ref{eq:gumbel_bellman} is over-optimism about dynamics \citep{levine2018reinforcement} when using simple-sample estimates for the Bellman backup. To overcome this issue in stochastic settings, we separate out the optimization of $V_\theta$ from that of $Q_\phi$ following Section~\ref{sec:max_ent_without_ent}. We learn $V_\theta$ using Eq.~\ref{eq:v} to directly fit the optimal soft-values $V^*(\s)$ based on Gumbel regression.
Using $V_\theta(\sn)$ we can get single-sample estimates of $\mathcal{B^*}$ as $r(\s,\a) + \gamma V_\theta(\sn)$. Now we can learn an unbiased expectation over the dynamics, $Q_\phi \approx \E_{\sn|\s, \a}[r(\s,\a) + \gamma V_\theta(\sn)]$ by minimizing the Mean-squared-error (MSE) loss between the single-sample targets and $Q_\phi$:
\begin{equation}
\label{eq:q_objective}
    \mathcal{L}(\phi) = \mathbb{E}_{(\s, \a, \s')\sim \mathcal{D}} \left[(Q_\phi(\s, \a) -  r(\s, \a) - \gamma  V_\theta(\s'))^2 \right].
\end{equation}
In deterministic dynamics, our approach is largely simplified and we  directly learn a single $Q_\phi$ using Eq.~\ref{eq:q_update} without needing to learn $\mathcal{B^*}$ or ${V^*}$.
Similarly, we learn soft-optimal policies using  Eq.~\ref{eq:policy_1} (offline) or Eq.~\ref{eq:policy_2} (online) settings.

\textbf{Offline RL}. In the offline setting, $\mathcal{D}$ is specified as an offline dataset assumed to be collected with the behavior policy $\pi_\mathcal{D}$. Here, learning values with Eq.~\ref{eq:v} has a number of practical benefits. First, we are able to fit the optimal soft-values $V^*$ \textit{without sampling from a policy network}, which has been shown to cause large out-of-distribution errors in the offline setting where mistakes cannot be corrected by collecting additional data. Second, we inherently enforce a KL-constraint on the optimal policy $\pi^*$ and the behavior policy $\pi_\mathcal{D}$. This provides tunable conservatism via the temperature $\beta$. After offline training of $Q_\phi$ and $V_\theta$, we can recover the policy post-training using the AWR objective (Eq.~\ref{eq:policy_1}). Our practical implementation follows the training style of~\citet{kostrikov2021offline}, but we train value network using using our ExtremeQ loss.

\textbf{Online RL}. In the online setting, $\mathcal{D}$ is usually given as a replay buffer of previously sampled states and actions. In practice, however, obtaining a good estimate of $V^*(\s')$ requires that we sample actions with high Q-values instead of uniform sampling from $\mathcal{D}$. As online learning allows agents to correct over-optimism by collecting additional data, we use a previous version of the policy network $\pi_\psi$ to sample actions for the Bellman backup, amounting to the trust-region policy updates detailed at the end of Section \ref{sec:policies}. In practice, we modify SAC and TD3 with our formulation. To embue SAC \citep{haarnoja2018soft} with the benefits of Extreme Q-learning, we simply train $V_\theta$ using Eq.~\ref{eq:v} with $\s \sim \mathcal{D}, \a \sim \pi_{\psi_k}(\a|\s)$. This means that we do not use action probabilities when updating the value networks, unlike other MaxEnt RL approaches. The policy is learned via the objective $\max_\psi \mathbb{E}[ Q_\phi(s, \pi_\psi(s))]$ with added entropy regularization, as SAC does not use a fixed noise schedule. TD3 by default does not use a value network, and thus we use our algorithm for deterministic dynamics by changing the loss to train $Q$ in TD3 to directly follow Eq.~\ref{eq:q_update}. The policy is learned as in SAC, except without entropy regularization as TD3 uses a fixed noise schedule.

\section{Experiments}
We compare our Extreme Q-Learning (\X QL) approach to state-of-the-art algorithms across a wide set of continuous control tasks in both online and offline settings. In practice, the exponential nature of the Gumbel regression poses difficult optimization challenges. We provide Offline results on Androit, details of loss implementation, ablations, and hyperparameters in Appendix \ref{app:experiments}.
 
\subsection{Offline RL}

\vspace{-8pt}
\begin{table}[ht]
\centering
\scriptsize
\setlength\tabcolsep{3pt}
\renewcommand{\arraystretch}{1.1}
\caption{\small Averaged normalized scores on MuJoCo locomotion and Ant Maze tasks. \X QL-C gives results with the same consistent hyper-parameters in each domain, and \X QL-T gives results with per-environment $\beta$ and hyper-parameter tuning.}
\vspace{-0.1in}
\begin{tabular}{c|l||rrrrrrrr|rr}
& \multicolumn{1}{c||}{Dataset} & BC & 10\%BC & DT & AWAC & Onestep RL & TD3+BC & CQL & IQL & \bf{$\mathcal{X}$-QL C} & \bf{$\mathcal{X}$-QL T} \\\hline
\parbox[t]{2mm}{\multirow{9}{*}{\pix\rotatebox[origin=c]{90}{\textbf{Gym}}}} & ~~ 
halfcheetah-medium-v2 & 42.6 & 42.5 & 42.6 & 43.5 & \textbf{48.4} & \textbf{48.3} &44.0 & \textbf{47.4} & \textbf{47.7} & \blue{48.3} \\ & ~~ 
hopper-medium-v2 & 52.9 & 56.9 & 67.6 & 57.0 & 59.6 & 59.3 & 58.5 & 66.3 & \textbf{71.1} & \blue{74.2} \\ &~~ 
walker2d-medium-v2 & 75.3 &75.0 &74.0 &72.4 & \textbf{81.8} & \textbf{83.7} & 72.5 & 78.3 & \textbf{81.5} & \blue{84.2}  \\ & ~~ 
halfcheetah-medium-replay-v2 & 36.6 & 40.6 & 36.6 & 40.5 & 38.1 & \textbf{44.6} & \textbf{45.5} &\textbf{ 44.2} & \textbf{44.8} & \blue{45.2}  \\ & ~~ 
hopper-medium-replay-v2 & 18.1 & 75.9 & 82.7 & 37.2 & \textbf{97.5} & 60.9 & 95.0 &  94.7 & \textbf{97.3} & \blue{100.7} \\ & ~~ 
walker2d-medium-replay-v2 & 26.0 & 62.5 & 66.6 & 27.0 & 49.5 & \textbf{81.8} & 77.2 & 73.9 & 75.9 & \blue{82.2}  \\ & ~~ 
halfcheetah-medium-expert-v2 & 55.2 & 92.9 & 86.8 & 42.8 & \textbf{93.4} & 90.7 & 91.6 & 86.7 & 89.8 & \blue{94.2}  \\ & ~~ 
hopper-medium-expert-v2 &52.5 & \textbf{110.9} & \textbf{107.6} & 55.8 & 103.3 & 98.0 & 105.4 & 91.5 & \textbf{107.1} & \blue{111.2}  \\ & ~~ 
walker2d-medium-expert-v2 & 107.5 & \textbf{109.0} & 108.1 & 74.5 & \textbf{113.0} & \textbf{110.1} & 108.8 & \textbf{109.6} & \textbf{110.1} & \blue{112.7} \\ \hline 
 \parbox[t]{2mm}{\multirow{6}{*}{\rotatebox[origin=c]{90}{\textbf{AntMaze}}}} & ~~ 
antmaze-umaze-v0 & 54.6 & 62.8 & 59.2 & 56.7 & 64.3 & 78.6 & 74.0 & \textbf{87.5} & \textbf{87.2} & \blue{93.8} \\  & ~~ 
antmaze-umaze-diverse-v0 & 45.6 & 50.2 & 53.0 & 49.3 & 60.7 & 71.4 & \textbf{84.0} & 62.2 & 69.17 & \blue{82.0}   \\  & ~~ 
antmaze-medium-play-v0 & 0.0 & 5.4 & 0.0 & 0.0 & 0.3 & 10.6 & 61.2 & 71.2 & \textbf{73.5} & \blue{76.0}   \\  & ~~
antmaze-medium-diverse-v0 & 0.0 & 9.8 & 0.0 & 0.7 & 0.0 & 3.0 & 53.7 & \textbf{70.0} & \textbf{67.8} & \blue{73.6}  \\  & ~~ 
antmaze-large-play-v0 &0.0 &0.0 &0.0 &0.0 &0.0 &0.2 &15.8 & 39.6 & \textbf{41} &\blue{46.5}  \\  & ~~ 
antmaze-large-diverse-v0 & 0.0 & 6.0 & 0.0 & 1.0 & 0.0 & 0.0 & 14.9 & \textbf{47.5} & \textbf{47.3 }& \blue{49.0} \\ \hline 
\parbox[t]{2mm}{\multirow{3}{*}{\rotatebox[origin=c]{90}{\textbf{Franka}}}} & ~~ 
kitchen-complete-v0 & 65.0 & -  & -  & -   & - &- &43.8 & 62.5 & \textbf{72.5} & \blue{82.4} \\ & ~~ 
kitchen-partial-v0 &38.0 & - & - & -  & -  &- & 49.8 & 46.3 & \textbf{73.8}  & \blue{73.7}  \\ & ~~ 
kitchen-mixed-v0 & 51.5 & - & - & - &-  & - & 51.0 & 51.0 & \textbf{54.6} & \blue{62.5}   \\ \hline \hline
& runtime & 10m & 10m & 960m & 20m & 20m & 20m & 80m & 20m & 10-20m & 10-20m$^*$ \\
\end{tabular}

$^*$We see very fast convergence for our method on some tasks, and saturate performance at half the iterations as IQL.

\label{tab:d4rl}
\vspace{-10pt}
\end{table}

Our offline results with fixed hyperparameters for each domain outperform prior methods \citep{chen2021decision,kumar2019stabilizing,kumar2020conservative,kostrikov2021offline,fujimoto2021minimalist} in several environments, reaching \emph{state-of-the-art} on the Franka Kitchen tasks, as shown in Table~\ref{tab:d4rl}. We find performance on the Gym locomotion tasks to be already largely saturated without introducing ensembles \cite{an2021edac}, but our method achieves consistently high performance across environments. Table \ref{tab:franca_adroit} shows results for the Androit benchmark in D4RL. Again, we see strong results for \X QL. \X QL-C surpasses prior works on five of the eight tasks. While we attain good performance using fixed hyper-parameters per domain, \X QL achieves even higher absolute performance and faster convergence than IQL's reported results when hyper-parameters are turned per environment. With additional tuning, we also see particularly large improvements on the {AntMaze} tasks, which require a significant amount of ``stitching'' between trajectories~\citep{kostrikov2021offline}. Full learning curves are in the Appendix. Like IQL, \X QL can be easily fine-tuned using online data to attain even higher performance as shown in Table \ref{tab:finetuning}. \looseness=-1

\begin{table}[h]
\centering
\caption{\small Evaluation on Adroit tasks from D4RL. \X QL-C gives results with the same hyper-parameters used in the Franka Kitchen as IQL, and \X QL-T gives results with per-environment $\beta$ and hyper-parameter tuning.}
\scriptsize
\begin{tabular}{l||rrrrrr|rr}
Dataset &BC &BRAC-p &BEAR &Onestep RL &CQL & IQL & \blue{\X QL C} & \blue{\X QL T} \\ \hline
pen-human-v0 &63.9 &8.1 &-1.0 &- &37.5 & 71.5 & \textbf{85.5} & \blue{85.5}\\
hammer-human-v0 &1.2 &0.3 &0.3 &- & 4.4 &1.4 & 2.2 & \blue{8.2} \\
door-human-v0 &2 &-0.3 &-0.3 &- & 9.9 &4.3 & \textbf{11.5} & \blue{11.5} \\
relocate-human-v0 &0.1 &-0.3 &-0.3 &- &\textbf{0.2} &0.1 & \textbf{0.17} & \blue{0.24} \\
pen-cloned-v0 &37 &1.6 &26.5 & \textbf{60.0} &39.2 &37.3 & 38.6 & 53.9 \\
hammer-cloned-v0 &0.6 &0.3 &0.3 & 2.1 & 2.1 & 2.1  & \textbf{4.3} & \blue{4.3} \\
door-cloned-v0 &0.0 &-0.1 &-0.1 &0.4 &0.4 & 1.6 & \textbf{5.9} & \blue{5.9} \\
relocate-cloned-v0 &-0.3 &-0.3 &-0.3 &-0.1 &-0.1 &-0.2 & -0.2 & \blue{-0.2}\\ 
\end{tabular}
\label{tab:franca_adroit}
\end{table}

\begin{wraptable}{R}{0.6\textwidth}
\vspace{-16pt}
\caption{\small Finetuning results on the AntMaze environments}
\vspace{-8pt}
\scriptsize
\begin{tabular}{ l || p{1.5cm} | p{1.5cm} | p{1.5cm} }
    \centering
    Dataset & CQL & IQL & \X QL T \\
    \hline
    umaze-v0 & 70.1 \; $\rightarrow$ \textbf{99.4} & 86.7 \; $\rightarrow$ 96.0 & \textbf{93.8} \; $\rightarrow$ \textbf{99.6} \\
    umaze-diverse-v0  & 31.1 \; $\rightarrow$ \textbf{99.4} & 75.0 \; $\rightarrow$ 84.0  & \textbf{82.0} \; $\rightarrow$ \textbf{99.0}  \\
    medium-play-v0 & 23.0 \; $\rightarrow$ 0.0 & 72.0 \; $\rightarrow$ 95.0  & \textbf{76.0}  \; $\rightarrow$ \textbf{97.0}   \\
    medium-diverse-v0 & 23.0 \; $\rightarrow$ 32.3 & 68.3 \; $\rightarrow$ 92.0 & \textbf{73.6} \; $\rightarrow$ \textbf{97.1}  \\
    large-play-v0 & 1.0 \; \; $\rightarrow$ 0.0 & 25.5 \; $\rightarrow$ 46.0 & \textbf{45.1} \; $\rightarrow$ \textbf{59.3}  \\
    large-diverse-v0 & 1.0 \; \; $\rightarrow$ 0.0 & 42.6 \; $\rightarrow$ 60.7 & \textbf{49.0} \; $\rightarrow$ \textbf{82.1}
\end{tabular}
\label{tab:finetuning}
\vspace{-8pt}
\end{wraptable}

\subsection{Online RL}
We compare ExtremeQ variants of SAC~\citep{haarnoja2018soft} and TD3~\citep{Fujimoto2018AddressingFA}, denoted \X SAC and \X TD3, to their vanilla versions on tasks in the DM Control, shown in Figure \ref{fig:online_rl}. Across all tasks an ExtremeQ variant matches or surpasses the performance of baselines. We see particularly large gains in the Hopper environment, and more significant gains in comparison to TD3 overall.  Consistent with SAC~\citep{haarnoja2018soft}, we find the temperature $\beta$ needs to be tuned for different environments with different reward scales and sparsity. A core component of TD3 introduced by~\cite{Fujimoto2018AddressingFA} is Double Q-Learning, which takes the minimum of two $Q$ functions to remove overestimate bias in the Q-target. As we assume errors to be Gumbel distributed, we expect our \X variants to be more robust to such errors. In all environments except Cheetah Run, our \X TD3 without the Double-Q trick, denoted \X QL - DQ, performs better than standard TD3. While the gains from Extreme-Q learning are modest in online settings, none of our methods require access to the policy distribution to learn the Q-values. 
\looseness=-1

\begin{figure}[t]
\centering
\vspace{-2pt}
\includegraphics[width=1.0\textwidth]{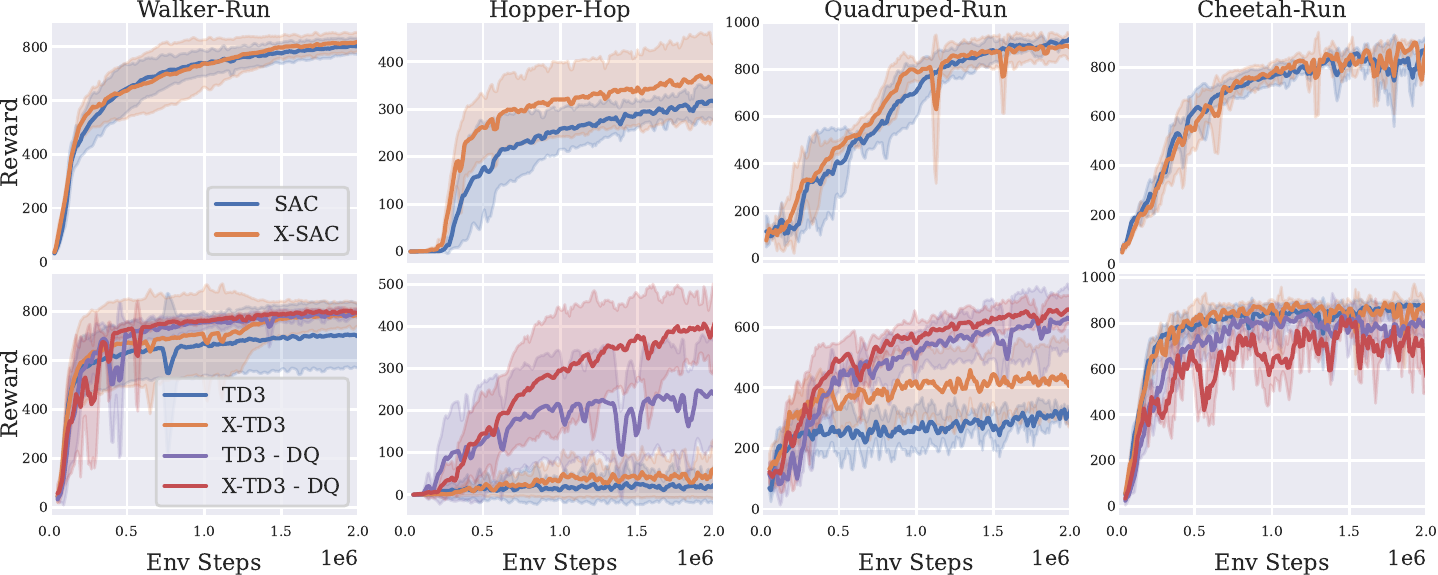}
\vspace{-0.2in}
\caption{\small Results on the DM Control for SAC and TD3 based versions of Extreme Q Learning.}
\label{fig:online_rl}
\vspace{-10pt}
\end{figure}

\section{Related Work}
Our approach builds on works online and offline RL. Here we review the most salient ones. Inspiration for our framework comes from econometrics \citep{econ_gumbel, McFadden1972ConditionalLA}, and our Gumbel loss is motivated by IQ-Learn~\citep{garg2021iqlearn}.

\textbf{Online RL}. Our work bridges the theoretical gap between RL and Max-Ent RL by introducing our Gumbel loss function. Unlike past work in MaxEnt RL \citep{haarnoja2018soft, eysenbach2020if}, our method does not require explicit entropy estimation and instead addresses the problem of obtaining soft-value estimates (LogSumExp) in high-dimensional or continuous spaces \citep{vieillard2021implicitly} by directly modeling them via our proposed Gumbel loss, which to our knowledge has not previously been used in RL. Our loss objective is intrinsically linked to the KL divergence, and similar objectives have been used for mutual information estimation~\citep{Poole2019OnVB} and statistical learning~\cite{parsian2002estimation,atiyah2020fuzzy}. IQ-Learn~\citep{garg2021iqlearn} proposes learning Q-functions to solve imitation introduced the same loss in IL to obtain an unbiased dual form for the reverse KL-divergence between an expert and policy distribution. Other works have also used forward KL-divergence to derive policy objectives \citep{peng2019advantage} or for regularization \citep{schulman2015trust, abdolmaleki2018maximum}. Prior work in RL has also examined using other types of loss functions \citep{bas2021logistic} or other formulations of the $\argmax$ in order to ease optimization \citep{asadi2017alternative}. Distinct from most off-Policy RL Methods \citep{lillicrap2015continuous, Fujimoto2018AddressingFA,haarnoja2018soft}, we directly model $\mathcal{B}^*$ like \citet{haarnoja2017reinforcement,heess2015learning} but attain significantly more stable results.
\looseness=-1

\textbf{Offline RL}. Prior works in offline RL can largely be categorized as relying on constrained or regularized Q-learning \citep{wu2019behavior, fujimoto2021minimalist, fujimoto2019off, kumar2019stabilizing, kumar2020conservative, nair2020awac}, or extracting a greedy policy from the known behavior policy \citep{peng2019advantage, brandfonbrener2021offline, chen2021decision}. Most similar to our work, IQL \citep{kostrikov2021offline} fits expectiles of the Q-function of the behavior policy, but is not motivated to solve a particular problem or remain conservative. On the other hand, conservatism in CQL~\citep{kumar2020conservative} is motivated by lower-bounding the Q-function. Our method shares the best of both worlds -- like IQL we do not evaluate the Q-function on out of distribution actions and like CQL we enjoy the benefits of conservatism. Compared to CQL, our approach uses a KL constraint with the behavior policy, and for the first time extends soft-Q learning to offline RL without needing a policy or explicit entropy values. Our choice of using the reverse KL divergence for offline RL follows closely with BRAC \citep{wu2019behavior} but avoids learning a policy during training. \looseness=-1

\vspace{-0.08in}
\section{Conclusion}
\vspace{-0.02in}
We propose Extreme Q-Learning, a new framework for MaxEnt RL that directly estimates the optimal Bellman backup $\mathcal{B}^*$ without relying on explicit access to a policy. Theoretically, we bridge the gap between the regular, soft, and conservative Q-learning formulations. Empirically, we show that our framework can be used to develop simple, performant RL algorithms. A number of future directions remain such as improving stability with training with the exponential Gumbel Loss function and integrating automatic tuning methods for temperature $\beta$ like SAC \citep{haarnoja2018soft}. Finally, we hope that our framework can find general use in Machine Learning for estimating log-partition functions.\looseness=-1


\textbf{Acknowledgements}

Div derived the theory for Extreme Q-learning and Gumbel regression framework and ran the tuned offline RL experiments. Joey ran the consistent offline experiments and online experiments. Both authors contributed equally to paper writing. 

We thank John Schulman and Bo Dai for helpful discussions. Our research was supported by NSF(1651565), AFOSR (FA95501910024), ARO (W911NF-21-1-0125), ONR, CZ Biohub, and a Sloan Fellowship. Joey was supported by the Department of Defense (DoD) through the National Defense Science \& Engineering Graduate (NDSEG) Fellowship Program.


\bibliography{main.bib}
\bibliographystyle{iclr2023_conference}

\appendix

\section{The Gumbel Error Model for MDPs}
\label{app:gem}
In this section, we functionally analyze Q-learning using our framework and further develop the Gumbel Error Model (GEM) for MDPs.

\subsection{Rust-McFadden Model of MDPs}

For an MDP following the Bellman equations, we assume the observed rewards to be stochastic due to an unobserved component of the state. Let $\s$ be the observed state, and $(\s, \z)$ be the actual state with hidden component $\z$. Then,
\begin{align}
    Q(\s, \z, \a) &= R(\s, \z, \a) + \gamma \E_{\s' \sim P(\cdot|\s, \a)}[\E_{\z'|\s'}[V(\s', \z')], \\
    V(\s,\z) &= \max_\a Q(\s, \z, \a). 
\end{align}

\begin{lemma}
\label{lemma:gumbel_mdp}

Given, 1) conditional independence (CI) assumption that $\z'$ depends only on $\s'$, i.e. $p(\s',z'|\s,\z, \a) = p(\z'|\s')p(\s'|\s, \a)$
and 2) additive separablity (AS) assumption on the hidden noise: $R(\s, \a, \z) = r(\s, \a) + \epsilon(\z, \a)$.

Then for i.i.d. $\epsilon(\z, \a) \sim \mathcal{G}(0, \beta)$, 
we recover the soft-Bellman equations for $Q(\s,\z, \a) = q(\s, \a) + \epsilon(\z, \a)$ and $v(\s) = \E_z[V(\s, \z)]$, with rewards $r(\s, \a)$ and entropy regularization $\beta$.

Hence, a soft-MDP in MaxEntRL is equivalent to an MDP with an extra hidden variable in the state that introduces i.i.d. Gumbel noise in the rewards and follows the AS+CI conditions.
\proof{

We have,
\begin{align}
    q(\s, \a) &=  r(\s, \a) + \gamma \E_{\s' \sim P(\cdot|\s, \a)}[\E_{\z'|\s'}[V(\s',\z')] \\
    v(\s) &= \E_{\z}[V(\s,\z)] = \E_{z}[\max_\a(q(\s, \a) + \epsilon(\z))]. 
\end{align}
From this, we can get fixed-point equations for $q$ and $\pi$,
\begin{align}
    \label{eq:surplus}
    q(\s, \a) &= r(\s, \a) + \gamma \E_{\s' \sim P(\cdot |\s, \a)}[\E_{\z'|\s'}[\max_{\a'} (q(\s',\a') + \epsilon(\z', \a'))]], \\
        \pi(\cdot | s) &= \E_\z[ \argmax_\a(q(\s, \a) + \epsilon(\z, \a))] \in \Delta_\mathcal{A}, \label{eq:surplus_2}
\end{align}
where $\Delta_\mathcal{A}$ is the set of all policies.

Now, let $\epsilon(\z, \a) \sim \mathcal{G}(0,\beta)$ and assumed independent for each $(\z, \a)$ (or equivalently $(\s, \a)$ due to the CI condition). Then we can use the Gumbel-Max trick to recover the soft-Bellman equations for $q(\s, \a)$ and $v(\s)$ with rewards $r(\s, \a)$:
\begin{align}
    \label{eq:softsurplus}
    q(\s, \a) &= r(\s, \a) + \gamma \E_{\s' \sim P(\cdot |\s, \a)}[\L^\beta_{\a'} [q(\s',\a')]], \\
        \pi(\cdot | s) &= \softmax_\a(q(\s, \a)).
\end{align}

Thus, we have that the soft-Bellman optimality equation and related optimal policy can arise either from the entropic regularization viewpoint or from the Gumbel error viewpoint for an MDP.

}
\end{lemma}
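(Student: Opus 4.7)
The plan is to decompose the augmented-state $Q$-function into a deterministic part and a Gumbel noise part, and then invoke the Gumbel-Max trick from Section \ref{sec:evt} to convert the expected maximum over actions into a LogSumExp.

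First, I would substitute the AS condition $R(\s, \z, \a) = r(\s, \a) + \epsilon(\z, \a)$ into the Bellman equation for the full state $(\s, \z)$. The CI condition $p(\s', \z' \mid \s, \z, \a) = p(\z' \mid \s') p(\s' \mid \s, \a)$ factors the successor distribution so that $\z'$ only depends on $\s'$; this lets me move the expectation over $\z'$ inside and define $v(\s) := \E_{\z \mid \s}[V(\s, \z)]$ as a function purely of the observed state. Setting $q(\s, \a) := r(\s, \a) + \gamma \E_{\s' \mid \s, \a}[v(\s')]$ would then give the clean decomposition $Q(\s, \z, \a) = q(\s, \a) + \epsilon(\z, \a)$, which is the main structural claim.

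Next I would plug this decomposition into $V(\s, \z) = \max_\a Q(\s, \z, \a)$ and take the expectation over $\z$, producing $v(\s) = \E_\z[\max_\a (q(\s, \a) + \epsilon(\z, \a))]$. This is exactly the setup where the Gumbel-Max trick applies: since the $\epsilon(\z, \a)$ are i.i.d.\ $\mathcal{G}(0, \beta)$ across actions (by the CI assumption applied across $(\s, \a)$ pairs), we get
\begin{equation*}
v(\s) = \beta \log \sum_\a \exp(q(\s, \a)/\beta) = \mathbb{L}^\beta_\a[q(\s, \a)],
\end{equation*}
and analogously the distribution of $\argmax_\a (q(\s, \a) + \epsilon(\z, \a))$ averaged over $\z$ is $\softmax(q(\s, \cdot)/\beta)$. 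Substituting $v$ back into the definition of $q$ yields the soft-Bellman equation $q(\s, \a) = r(\s, \a) + \gamma \E_{\s' \mid \s, \a}[\mathbb{L}^\beta_{\a'}[q(\s', \a')]]$ with entropy scale $\beta$, and the policy identity gives $\pi(\cdot \mid \s) = \softmax(q(\s, \cdot)/\beta)$, which together establish the claimed equivalence.

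The main obstacle is justifying the interchange of the $\max$ and the expectation over the hidden variables cleanly — specifically, arguing that CI really does decouple the noise at the successor state $\s'$ from the noise at $(\s, \z, \a)$, so that the Gumbel-Max trick applies pointwise inside each Bellman backup rather than getting entangled through the dynamics. Once CI gives independence of $\epsilon$ across $(\s, \a)$ and AS separates the noise additively from the reward, the rest is a direct application of the Gumbel properties already recorded in Section \ref{sec:evt}.
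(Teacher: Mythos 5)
Your proposal is correct and follows essentially the same route as the paper's proof: decompose $Q(\s,\z,\a)=q(\s,\a)+\epsilon(\z,\a)$ via the AS condition, use CI to reduce the value recursion to $v(\s)=\E_\z[\max_\a(q(\s,\a)+\epsilon(\z,\a))]$ with $\epsilon$ i.i.d.\ across state--action pairs, and then apply the Gumbel-Max trick to obtain $v(\s)=\mathbb{L}^\beta_\a[q(\s,\a)]$ and $\pi=\softmax_\a(q(\s,\a))$. Your version is slightly more explicit about how CI factors the successor distribution and justifies the interchange, but the substance is identical.
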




\begin{corollary}
\label{lemma:gumbel_mdp_corr}
Converse: An MDP following the Bellman optimality equation and having a policy that is $\softmax$ distributed, necessarily has any i.i.d. noise in the rewards due to hidden state variables
be Gumbel distributed, given the AS+CI conditions hold.
\end{corollary}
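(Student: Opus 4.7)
The plan is to invert the argument used in Lemma~\ref{lemma:gumbel_mdp} and lean crucially on the uniqueness of the Gumbel-Max trick cited in the preliminaries (\citet{LUCE1977215}). Starting from an MDP with unobserved state component $\z$ and i.i.d.\ rewards $R(\s,\a,\z) = r(\s,\a) + \epsilon(\z,\a)$ satisfying the AS+CI hypotheses, the same algebraic manipulations that produced equations~\eqref{eq:surplus}--\eqref{eq:surplus_2} still go through \emph{without} assuming the noise is Gumbel. That is, one can define $q(\s,\a)$ as the solution of the fixed-point equation
\begin{equation*}
q(\s,\a) = r(\s,\a) + \gamma \, \E_{\s' \sim P(\cdot|\s,\a)}\bigl[\E_{\z'|\s'}[\max_{\a'}(q(\s',\a') + \epsilon(\z',\a'))]\bigr],
\end{equation*}
and observe that the induced policy is $\pi(\cdot\mid\s) = \E_{\z}[\argmax_\a(q(\s,\a) + \epsilon(\z,\a))]$.

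Next I would invoke the hypothesis that this MDP satisfies the Bellman \emph{optimality} equation and that its optimal policy has the softmax form $\pi(\a\mid\s) \propto \exp(q(\s,\a)/\beta)$ for some $\beta > 0$. Combining this with the display above yields the identity
\begin{equation*}
\E_{\z}\bigl[\argmax_\a(q(\s,\a) + \epsilon(\z,\a))\bigr] \;=\; \softmax_\a\bigl(q(\s,\a)/\beta\bigr)
\end{equation*}
for every state $\s$ and every value of the $q$-vector that arises. Because the range of $q(\s,\cdot)$ sweeps out a rich enough subset of $\R^{|\mathcal{A}|}$ (e.g.\ by perturbing $r(\s,\cdot)$), the identity must hold for arbitrary location vectors, so the distribution of the noise $\epsilon$ is constrained to be one whose perturb-and-maximize distribution is exactly the softmax.

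The key step is then to apply the Luce--McFadden uniqueness theorem for the Gumbel-Max trick: among all i.i.d.\ noise distributions, only the Gumbel $\mathcal{G}(0,\beta)$ family produces $\argmax$ probabilities of softmax form with temperature $\beta$. This forces $\epsilon(\z,\a) \sim \mathcal{G}(0,\beta)$, and matching the scale of the softmax to the scale of the Gumbel fixes the temperature. The main obstacle I anticipate is the regularity condition required by Luce's result, namely having enough variation in the payoff vector $q(\s,\cdot)$ so that the choice probabilities are pinned down on a dense subset of $\R^{|\mathcal{A}|}$; this is what guarantees that a single functional identity of softmax type is enough to determine the noise law. Once this is in place, the corollary follows immediately by feeding the derived Gumbel noise back into Lemma~\ref{lemma:gumbel_mdp} for consistency, closing the equivalence between soft-MDPs and MDPs with hidden Gumbel reward noise under AS+CI.
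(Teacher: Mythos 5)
Your proposal is correct and follows essentially the same route as the paper: both reduce the claim to the statement that i.i.d.\ perturbations whose induced $\argmax$ probabilities are of softmax form must be Gumbel, and both defer that uniqueness to McFadden and to \citet{LUCE1977215} (for $N \geq 3$). The only cosmetic difference is that the paper writes out the choice probability explicitly as an integral $\pi(i) = \int \prod_{j \neq i} F(\varepsilon + q_i - q_j)\, dF(\varepsilon)$ over the unknown noise CDF $F$ before invoking uniqueness, and the regularity condition you flag (sufficient variation of the payoff vector) is exactly the ``translation completeness'' hypothesis the paper attributes to McFadden.
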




\begin{proof}
McFadden~\citep{McFadden1972ConditionalLA} proved this converse in his seminal work on discrete choice theory, that for i.i.d. $\epsilon$ satisfiying Equation~\ref{eq:surplus} with a choice policy $\pi \sim \softmax$ has $\epsilon$ be Gumbel distributed. And we show a proof here similar to the original for MDPs.

Considering Equation~\ref{eq:surplus_2}, we want $\pi(a | s)$ to be $\softmax$ distributed. Let $\epsilon$ have an unknown CDF $F$ and we consider there to be $N$ possible actions. Then,
\begin{align*}
P(\argmax_\a (q(\s, \a) + \epsilon(z, \a)) = \a_i| \s, \z) &= P(q(\s, \a_i) + \epsilon(\z, \a_i) \geq q(\s, \a_j) + \epsilon(z, \a_j) \ \forall i \neq j  \ | \s, \z) \\
&= P(\epsilon(\z, \a_j) - \epsilon(\z, \a_i) \leq q(\s, \a_i) - q(\s, \a_j)   \ \forall i \neq j \  | \s, \z)
\end{align*}

Simplifying the notation, we write $\epsilon(\z, \a_i) = \epsilon_i$ and $q(\s, \a_i) = q_i$. Then $\epsilon_1, ..., \epsilon_N$ has a joint CDF $G$:
\begin{align*}
G(\epsilon_1, ..., \epsilon_N) &= \prod_{j=1}^N P(\epsilon_j \leq \epsilon_i + q_i - q_j) = \prod_{j=1}^N F(\epsilon_i + q_i - q_j)
\end{align*}

and we can get the required probability $\pi(i)$ as:

\begin{equation}
\pi(i) = \int_{\varepsilon = -\infty}^{+\infty} \prod_{j=1, j \neq i}^N F(\varepsilon + q_i - q_j) d F(\varepsilon)
\end{equation}

 For $\pi = \softmax(q)$, McFadden~\citep{McFadden1972ConditionalLA} proved the uniqueness of $F$ to be the Gumbel CDF, assuming translation completeness property to hold for $F$. Later this uniqueness was shown to hold in general for any $N \geq 3$~\citep{LUCE1977215}.  
\end{proof}

\subsection{Gumbel Error Model (GEM) for MDPs}
To develop our Gumbel Error Model (GEM) for MDPs under functional approximation as in Section~\ref{sec:gumbel_err}, we follow our simplified scheme of $M$ independent estimators $\hat{Q}$, which results in the following equation over $\bar{Q} = \E[\hat{Q}]$:
\begin{align}
\label{eq:q_iter_err3}
    \bar{Q}_{t+1}(\s, \a) = r(\s, \a) + \gamma \E_{\s'|\s,\a}[\E_{\epsilon_t}[\max_{\a'} (\bar{Q}_t(\s',\a') + \epsilon_t(\s',\a'))]].
\end{align}
\looseness=-1
Here, the maximum of random variables
will generally be greater than the true max, i.e. $\E_{\epsilon}[\max_{\a'} (\bar{Q}(\s',\a') + \epsilon(\s', \a'))] \geq \max_{\a'} \bar{Q}(\s',\a')$~\citep{Thrun1999IssuesIU}. As a result, even initially zero-mean error can cause Q updates to propagate consistent overestimation bias through the Bellman equation. This is a known issue with function approximation in RL~\citep{Fujimoto2018AddressingFA}. 

Now, we can use the Rust-McFadden model from before. To account for the stochasticity, we consider extra unobserved state variables $z$ in the MDP to be the model parameters $\theta$ used in the functional approximation. The errors from functional approximation $\epsilon_t$ can thus be considered as noise added in the reward. Here, $CI$ condition holds as $\epsilon$ is separate from the dynamics and becomes conditionally independent for each state-action pair and $AS$ condition is implied. Then for $\bar{Q}$ satisfying Equation~\ref{eq:q_iter_err3},
we can apply the McFadden-Rust model, which implies that for the policy to be soft-optimal i.e. a softmax over $\bar{Q}$, $\epsilon$ will be Gumbel distributed. 


Conversely, for the i.i.d. $\epsilon \sim \mathcal{G}$, $\bar{Q}(\s,\a)$ follows the soft-Bellman equations and $\pi(\a|\s) = \softmax({Q}(\s,\a))$. 

This indicates an optimality condition on the MDP -- for us to eventually attain the optimal $\softmax$ policy in the presence of functional boostrapping (Equation~\ref{eq:q_iter_err3}), the errors should follow 
the Gumbel distribution. 

\subsubsection{Time Evolution of Errors in MDPs under Deterministic dynamics}

In this section, we characterize the time evolution of errors in an MDP using GEM.
We assume deterministic dynamics to simplify our analysis.

We suppose that we know the distribution of Q-values at time $t$ and model the evolution of this distribution through the Bellman equations. Let $Z_t(\s, \a)$ be a random variable sampled from the distribution of $Q$-values at time $t$, then the following Bellman equation holds:
\begin{align}
\label{eq:value_rv}
        Z_{t+1}(\s, \a) &=r(\s, \a) +  \gamma \max_{\a'} Z_t(\s', \a').
\end{align}
Here, $Z_{t+1}(\s, \a) = \max_{\a'} [r(\s, \a) +  \gamma Z_t(\s', \a')]$ is a maximal distribution and based on EVT should eventually converge to an extreme value distribution, which we can model as a Gumbel.

Concretely, let's assume that we fix $Z_t(\s, \a) \sim \mathcal{G}(Q_t(\s, \a), \beta)$ for some $Q_t(\s, \a) \in \mathbb{R}$ and $\beta > 0$. Furthermore, we assume that the Q-value distribution is jointly independent over different state-actions i.e. $Z(\s, \a)$ is independent from $Z(\s', \a')$ for $\forall \ (\s, \a) \neq (\s', \a')$. Then $\max_{\a'} Z_t(\s', \a') \sim \mathcal{G}(V(\s'), \beta)$ with $V(\s) = \mathbb{L^\beta_\a}[Q(\s, \a)]$ using the Gumbel-max trick.

Then substituting in Equation~\ref{eq:value_rv} and rescaling $Z_t$ with $\gamma$, we get:
\begin{equation}
\label{eq:gumbel_process}
Z_{t+1}(\s, \a) \sim \G \left(r(\s, \a) + \gamma \L^\beta_{\a'} [Q(\s', \a')], \gamma\beta\right).
\end{equation}
So very interestingly the  Q-distribution becomes a Gumbel process, where the location parameter $Q(\s, \a)$ follows the optimal soft-Bellman equation. Similarly, the temperature scales as $\gamma \beta$ and the distribution becomes sharper after every timestep. 

After a number of timesteps, we see that $Z(\s, \a)$ eventually collapses to the Delta distibution over the unique contraction $Q^*(\s, \a)$. Here, $\gamma$ controls the rate of decay of the Gumbel distribution into the collapsed Delta distribution. Thus we get the expected result in deterministic dynamics that the optimal $Q$-function will be deterministic and its distribution will be peaked.

So if a Gumbel error enters into the MDP through a functional error or some other source at a timestep $t$ in some state $s$, it will trigger off an wave  that propagates the Gumbel error into its child states following Equation~\ref{eq:gumbel_process}. 
Thus, this Gumbel error process will decay at a $\gamma$ rate every timestep and eventually settle down with Q-values reaching the the steady solution $Q^*$. The variance of this Gumbel process given as $\frac{\pi^2}{6}\beta^2$ will decay as $\gamma^2$, similarly the bias will decay as $\gamma$-contraction in the $\mathcal{L}^\infty$ norm.

Hence, GEM gives us an analytic characterization of error propogation in MDPs under deterministic dynamics.

Nevertheless under stochastic dynamics, characterization of errors using GEM becomes non-trivial as Gumbel is not mean-stable unlike the Gaussian distribution. We hypothesise that the errors will follow some mix of Gumbel-Gaussian distributions, and leave this characterization as a future open direction.

\section{Gumbel Regression}
\label{app:gumbel}

We characterize the concentration bounds for Gumbel Regression in this section.
First, we bound the bias on applying $\L^\beta$ to inputs containing errors. Second, we bound the PAC learning error due to an empirical $\mathbb{\hat{L}}^\beta$ over finite $N$ samples.

\subsection{Overestimation Bias}
Let $\hat{Q}(\s , \a)$ be a random variable representing a Q-value estimate for a state and action pair $(\s, \a)$. We assume that it is an unbiased estimate of the true Q-value $Q(\s , \a)$ with $\mathbb{E}[\hat{Q}(\s , \a)] = Q(\s, \a)$. Let $Q(\s, \a) \in [-Q_{max}, Q_{max}]$

Then, $V(\s) = \mathbb{L}^\beta_{a \sim \mu}  Q(\s, \a)$ is the true value function, and $\hat{V}(\s) = \mathbb{L}^\beta_{a \sim \mu} \hat{Q}(\s , \a)$ is its estimate.

\begin{lemma}
\label{lemma:3}
We have $V(\s) \leq \mathbb{E}[\hat{V}(\s)]  \leq \mathbb{E}_{a\sim \mu} [Q(\s, \a)] + \beta \log \cosh(Q_{max}/\beta) $. 
\end{lemma}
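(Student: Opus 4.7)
The plan is to prove the two inequalities separately: convexity of the log-sum-exp functional handles the lower bound, while the upper bound requires Jensen's inequality in the opposite direction combined with a Hoeffding-type MGF estimate for the bounded estimator.

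For the lower bound $V(\s) \leq \mathbb{E}[\hat V(\s)]$, I will use that the functional $g \mapsto \L^\beta_{a\sim\mu}[g] = \beta \log \mathbb{E}_{a\sim\mu}[e^{g/\beta}]$ is convex in $g$ (as the log-MGF of an exponential family, whose second variation equals $(1/\beta)\mathrm{Var}_{\nu}[\cdot] \geq 0$ under the tilted measure $\nu$). Applied to $g = \hat{Q}(\s,\cdot)$, unbiasedness $\mathbb{E}[\hat Q(\s,\a)] = Q(\s,\a)$ together with Jensen's inequality immediately yields
$$\mathbb{E}[\hat V(\s)] \;=\; \mathbb{E}\bigl[\L^\beta_{a\sim\mu}\hat Q(\s,\a)\bigr] \;\geq\; \L^\beta_{a\sim\mu}\mathbb{E}[\hat Q(\s,\a)] \;=\; \L^\beta_{a\sim\mu}Q(\s,\a) \;=\; V(\s).$$

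For the upper bound I will proceed in three steps. First, since the outer $\log$ is concave, a second application of Jensen's inequality in the opposite direction swaps the outer expectation and the log:
$$\mathbb{E}[\hat V(\s)] \;=\; \mathbb{E}\bigl[\beta \log \mathbb{E}_{a\sim\mu}[e^{\hat Q/\beta}]\bigr] \;\leq\; \beta \log \mathbb{E}_{a\sim\mu}\bigl[\mathbb{E}[e^{\hat Q(\s,\a)/\beta}]\bigr].$$
Second, I will bound the inner MGF via the fact that $\hat Q(\s,\a) \in [-Q_{max}, Q_{max}]$: writing $\hat Q$ as the convex combination $\lambda Q_{max} + (1-\lambda)(-Q_{max})$ with $\lambda = (\hat Q + Q_{max})/(2Q_{max})$ and using convexity of $\exp$,
$$e^{\hat Q/\beta} \;\leq\; \cosh(Q_{max}/\beta) + \tfrac{\hat Q}{Q_{max}}\sinh(Q_{max}/\beta),$$
so taking expectation in the noise with $\mathbb{E}[\hat Q] = Q$ gives $\mathbb{E}[e^{\hat Q/\beta}] \leq \cosh(v) + (Q/Q_{max})\sinh(v)$, where $v := Q_{max}/\beta$. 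Third, I will take $\mathbb{E}_{a\sim\mu}$ (by linearity), factor out $\cosh(v)$, and apply $\log(1+y) \leq y$ together with $\tanh(v) \leq v$ to obtain
$$\mathbb{E}[\hat V(\s)] \;\leq\; \beta \log \cosh(v) + \beta \log\!\bigl(1 + \tfrac{\bar Q}{Q_{max}}\tanh(v)\bigr) \;\leq\; \beta \log \cosh(v) + \bar Q,$$
where $\bar Q := \mathbb{E}_{a\sim\mu}[Q(\s,\a)]$, as desired.

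The main obstacle will be the final estimate $\beta(\bar Q/Q_{max})\tanh(v) \leq \bar Q$. It is immediate from $\tanh(v) \leq v$ in the regime $\bar Q \geq 0$, but the inequality flips sign when $\bar Q < 0$ and the crude bound $\log(1+y) \leq y$ is no longer tight enough to close the argument in that regime. A sharper convex decomposition of $\exp$ keyed to the sign of $\bar Q$ would address this; failing that, the cleanest fix is to strengthen the hypothesis so that the centered estimation error $\hat Q - Q$ itself lies in $[-Q_{max}, Q_{max}]$, in which case Hoeffding's lemma gives $\mathbb{E}[e^{(\hat Q - Q)/\beta}] \leq \cosh(Q_{max}/\beta)$ directly and yields the cleaner bound $\mathbb{E}[\hat V(\s)] \leq V(\s) + \beta \log \cosh(Q_{max}/\beta)$ in one stroke.
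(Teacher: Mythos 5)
Your lower bound is exactly the paper's (Jensen's inequality via convexity of log-sum-exp), but your upper bound takes a genuinely different route and, as you suspect, the gap you flag is fatal rather than cosmetic. The problem is not the final crude estimate $\log(1+y)\leq y$: the intermediate bound you reach after steps 1--2, namely $\E[\hat V(\s)] \leq \beta\log\bigl(\cosh(v) + (\bar Q/Q_{max})\sinh(v)\bigr)$, is already weaker than the claimed bound $\bar Q + \beta\log\cosh(v)$ whenever $\bar Q<0$. Concretely, with $v=1$ and $\bar Q=-0.5\beta$ your bound evaluates to $\beta\log(1.5431-0.5876)\approx -0.046\beta$ while the lemma claims $\approx -0.066\beta$, so no amount of care with $\log(1+y)$ or $\tanh(v)\le v$ can close the argument; the linear chord bound on the noise MGF is simply too lossy in that regime. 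Your fallback via Hoeffding's lemma on the centered error proves $\E[\hat V(\s)]\leq V(\s)+\beta\log\cosh(Q_{max}/\beta)$, which is a different and weaker statement, since $V(\s)\geq \E_{\a\sim\mu}[Q(\s,\a)]$.

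The paper avoids the sign issue entirely by applying its bounding inequality over the \emph{action} distribution rather than the noise distribution: it invokes a reverse Jensen inequality, $\sum_i p_i f(x_i) \leq f(\sum_i p_i x_i) + f(a) + f(b) - f(\tfrac{a+b}{2})$ for convex $f$ on $[a,b]$, with $f=-\log$ and $x_i = e^{\hat Q(\s,\a)/\beta}\in[e^{-Q_{max}/\beta}, e^{Q_{max}/\beta}]$. The two endpoint terms cancel and the midpoint term produces $\log\cosh(Q_{max}/\beta)$, yielding the \emph{pathwise} inequality $\hat V(\s) \leq \E_{\a\sim\mu}[\hat Q(\s,\a)] + \beta\log\cosh(Q_{max}/\beta)$, valid for every realization of the noise and every sign of the $Q$-values; taking the expectation over the unbiased estimator then finishes. (Both your route and the paper's implicitly need $\hat Q$, not merely $Q$, to lie in $[-Q_{max},Q_{max}]$.) To salvage your own route you would need to perform the LogSumExp-versus-mean comparison at the level of the action average rather than through the noise MGF, which is precisely what the reverse Jensen step accomplishes.
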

\begin{proof} 
The lower bound $V(\s) \leq \mathbb{E}[\hat{V}(\s)]$ is easy to show using Jensen's Inequality as $log\_sum\_exp$ is a convex function.

For the upper bound, we can use a reverse Jensen's inequality~\citep{reversejensen} that for any convex mapping $f$ on the interval $[a, b]$ it holds that:
$$
\sum_{i} p_{i} f\left(x_{i}\right) \leq  f\left(\sum_{i} p_{i} x_{i}\right)+f(a)+f(b)-f\left(\frac{a+b}{2}\right)
$$

Setting $f = -\log(\cdot)$ and $x_i = e^{\hat{Q}(\s, \a)/\beta}$, we get:
$$
\mathbb{E}_{\a \sim \mu} [ -\log ( e^{\hat{Q}(\s, \a)/\beta})] \leq  -\log (\mathbb{E}_{\a \sim \mu} [e^{\hat{Q}(\s, \a)/\beta}]) -\log (e^{Q_{max}/\beta})-\log (e^{-Q_{max}/\beta})+\log \left(\frac{e^{Q_{max}/\beta}+e^{-Q_{max}/\beta}}{2}\right)
$$
On simplifying,
\begin{align*}
 \hat{V}(\s) = \beta \log (\mathbb{E}_{\a \sim \mu} e^{\hat{Q}(\s, \a)/\beta} ) &\leq \mathbb{E}_{\a \sim \mu} [\hat{Q}(\s, \a)] + \beta \log \cosh(Q_{max}/\beta)
\end{align*}
 Taking expectations on both sides, $\mathbb{E}[ \hat{V}(\s) ] \leq  \mathbb{E}_{\a \sim \mu} [Q(\s, \a)] + \beta \log \cosh(Q_{max}/\beta) $. This gives an estimate of how much the LogSumExp overestimates compared to taking the expectation over actions for random variables $\hat{Q}$. This bias monotonically decreases with $\beta$, with $\beta = 0$ having a max bias of $Q_{max}$ and for large $\beta$ decaying as $\frac{1}{2\beta} Q_{max}^2$.
 
\end{proof}


\subsection{PAC learning Bounds for Gumbel Regression}
\begin{lemma}
\label{lemma:partition}
$\exp({\mathbb{\hat{L}}^\beta(X)/\beta})$  over a finite $N$ samples is an unbiased estimator for the partition function $Z^\beta = \E\left[e^{X / \beta}\right]$
 and with a probability at least $1 - \delta$ it holds that:

$$\exp({\mathbb{\hat{L}}^\beta(X)/\beta}) \leq Z^\beta + \sinh(X_{max}/\beta)\sqrt{\frac{2 \log\left({1}/{\delta}\right)}{N}}. $$

Similarly, $\mathbb{\hat{L}}^\beta(X)$ over a finite $N$ samples is a consistent estimator of $\mathbb{L}^\beta(X)$ and with a probability at least $1 - \delta$ it holds that:

$$\mathbb{\hat{L}}^\beta(X) \leq \mathbb{L}^\beta(X) + \frac{\beta\sinh(X_{max}/\beta)}{Z^\beta}\sqrt{\frac{2 \log\left({1}/{\delta}\right)}{N}}. $$
\end{lemma}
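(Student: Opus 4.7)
The plan is to reduce both bounds to a single Hoeffding-type concentration inequality on the empirical estimator, and then transfer the bound through the logarithm for the second inequality. The first key observation is that
\begin{equation*}
\exp\!\bigl(\mathbb{\hat{L}}^\beta(X)/\beta\bigr) \;=\; \frac{1}{N}\sum_{i=1}^{N} e^{X_i/\beta},
\end{equation*}
so unbiasedness for $Z^\beta = \mathbb{E}[e^{X/\beta}]$ is immediate from linearity of expectation, and consistency of $\mathbb{\hat{L}}^\beta(X)$ for $\mathbb{L}^\beta(X)$ follows by the strong law of large numbers composed with the continuous map $\beta\log(\cdot)$.

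For the quantitative bound, I would note that since $X \in [-X_{\max}, X_{\max}]$, each summand $e^{X_i/\beta}$ lies in the interval $[e^{-X_{\max}/\beta}, e^{X_{\max}/\beta}]$, a bounded range of width exactly $2\sinh(X_{\max}/\beta)$. Applying the one-sided Hoeffding inequality to the i.i.d.\ sample mean $\tfrac{1}{N}\sum_i e^{X_i/\beta}$ gives, for any $t>0$,
\begin{equation*}
\Pr\!\left[\exp\!\bigl(\mathbb{\hat{L}}^\beta(X)/\beta\bigr) - Z^\beta \;\geq\; t\right] \;\leq\; \exp\!\left(-\frac{N t^2}{2\sinh^2(X_{\max}/\beta)}\right).
\end{equation*}
Setting the right-hand side equal to $\delta$ and solving for $t$ yields $t = \sinh(X_{\max}/\beta)\sqrt{2\log(1/\delta)/N}$, which establishes the first claimed inequality on the event of probability at least $1-\delta$.

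To obtain the second inequality, I would work on the same high-probability event and transfer through a logarithm. Writing $\hat{Z} \defeq \exp(\mathbb{\hat{L}}^\beta(X)/\beta)$ and $\varepsilon \defeq \sinh(X_{\max}/\beta)\sqrt{2\log(1/\delta)/N}$, the first bound reads $\hat{Z} \leq Z^\beta + \varepsilon$. Multiplying both sides of $\beta\log\hat{Z}$ by using the elementary inequality $\log(1+u) \leq u$ for $u > -1$, I get
\begin{equation*}
\beta\log\hat{Z} \;\leq\; \beta\log(Z^\beta+\varepsilon) \;=\; \beta\log Z^\beta + \beta\log\!\left(1+\tfrac{\varepsilon}{Z^\beta}\right) \;\leq\; \mathbb{L}^\beta(X) + \frac{\beta\,\varepsilon}{Z^\beta},
\end{equation*}
which is exactly the second inequality.

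There is no genuine obstacle here: the argument is a one-line Hoeffding bound followed by a log-linearization. The only subtlety worth flagging is the last step, where $\log(1+u)\leq u$ is tight only when $u$ is small; the inequality remains valid for all $u > -1$ and hence holds unconditionally, but the bound is informative only in the regime $\varepsilon \ll Z^\beta$, i.e.\ once $N$ is sufficiently large relative to $\sinh(X_{\max}/\beta)/Z^\beta$. I would mention this in a short remark, since it controls how quickly the PAC bound on $\mathbb{\hat{L}}^\beta$ becomes tight.
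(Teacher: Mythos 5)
Your proof is correct and follows essentially the same route as the paper: a one-sided Hoeffding bound on the sample mean of $e^{X_i/\beta}$ with range $2\sinh(X_{\max}/\beta)$, followed by passing the bound through $\beta\log(\cdot)$ via $\log(1+u)\leq u$. The only cosmetic difference is that the paper works with the sum $S_n$ and rescales $t=Ns$ rather than with the mean directly, and your closing remark on when the linearization is informative is a sensible addition not present in the paper.
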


\begin{proof}
To prove these concentration bounds, we consider random variables  $e^{X_1/\beta}, ..., e^{X_n/\beta}$ with $\beta > 0$, such that $a_i \leq X_i \leq b_i$ almost surely, i.e.  $e^{a_i/\beta} \leq e^{X_i/\beta} \leq e^{b_i/\beta}$.

We consider the sum $S_n = \sum_{i=1}^N e^{X_i/\beta}$ and use Hoeffding's inequality, so that for all $t > 0$:
\begin{equation}
P\left(S_{n}-\mathbb{E} S_{n} \geq t\right) \leq \exp\left({\frac{-2 t^{2}}{\sum_{i=1}^{n}\left(e^{b_{i}/\beta}-e^{a_{i}/\beta}\right)^{2}}}\right)
\end{equation}
To simplify, we let $a_i = -X_{max}$ and $b_i = X_{max}$ for all $i$. We also rescale t as $t = N s$, for $s > 0$. Then
\begin{equation}
P\left(S_{n}-\mathbb{E} S_{n} \geq N s\right) \leq \exp\left({\frac{- N s^{2}}{2 \sinh^2(X_{max}/\beta)}}\right)
\end{equation}
We can notice that L.H.S. is same as $P(\exp({\hat{\L}^\beta(X)/\beta}) - \exp({{\L}^\beta(X)/\beta}) \geq s)$, which is the required probability we want. Letting the R.H.S. have a value $\delta$, we get 
$$
s = \sinh(X_{max}/\beta)\sqrt{\frac{2\log\left({1}/{\delta}\right)}{N}}
$$

Thus, with a probability $1-\delta$, it holds that:
\begin{align}
    \exp({\mathbb{\hat{L}}^\beta(X)/\beta}) \leq \exp({\mathbb{L}^\beta(X)/\beta}) + \sinh(X_{max}/\beta)\sqrt{\frac{2\log\left({1}/{\delta}\right)}{N}}
\end{align}
Thus, we get a concentration bound on $\exp({\mathbb{\hat{L}}^\beta(X)/\beta})$ which is an unbiased estimator of the partition function $Z^\beta = \exp({\mathbb{L}^\beta(X)/\beta}) $.  This bound becomes tighter with increasing $\beta$, and asymptotically behaves as $\frac{X_{max}}{\beta} \sqrt{\frac{2\log\left({1}/{\delta}\right)}{N}}$.

Similarly, to prove the bound on the log-partition function $\mathbb{\hat{L}}^\beta(X)$, we can further take $\log(\cdot)$ on both sides and use the inequality $\log(1 + x) \leq x$, to get a direct concentration bound on $\mathbb{\hat{L}}^\beta(X)$, 
\begin{align}
    {\mathbb{\hat{L}}^\beta(X)} &\leq {\mathbb{L}^\beta(X)}  +\beta \log\left(1 +  \sinh(X_{max}/\beta)e^{-\mathbb{L}^\beta(X)/\beta}\sqrt{\frac{2\log\left({1}/{\delta}\right)}{N}}\right) \\
    &= \mathbb{L}^\beta(X) + \beta \sinh(X_{max}/\beta)e^{-\mathbb{L}^\beta(X)/\beta}\sqrt{\frac{2\log\left({1}/{\delta}\right)}{N}} \\
    &= \mathbb{L}^\beta(X) +  \frac{\beta\sinh(X_{max}/\beta)}{Z^\beta}\sqrt{\frac{2\log\left({1}/{\delta}\right)}{N}}
\end{align}
This bound also becomes tighter with increasing $\beta$, and asymptotically behaves as $\frac{X_{max}}{Z^\beta} \sqrt{\frac{2\log\left({1}/{\delta}\right)}{N}}$.

\end{proof}

\section{Extreme Q-Learning }
\label{app:xql}
In this section we provide additional theoretical details of our algorithm, \X QL, and its connection to conservatism in CQL \citep{kumar2020conservative}.

\subsection{\X QL}
For the soft-Bellman equation given as:
\begin{align}
    Q(\s, \a) &= r(\s, \a) + \gamma \mathbb{E}_{\s' \sim P(\cdot | \s, \a)} V(\s), \\
    V(\s) &= \mathbb{L}^\beta_{\mu(\cdot | s)} (Q(\s, \a)),
\end{align}


we have the fixed-point characterization, that can be found with a recurrence:
\begin{equation}
\label{eq:value}
        V(\s) = \mathbb{L}^\beta_{\mu(\cdot | s)} \left(r(\s, \a) + \gamma \mathbb{E}_{\s' \sim P(\cdot | \s, \a)} V(\s) \right).
\end{equation}

In the main paper we discuss the case of \X QL under stochastic dynamics which requires the estimation of $\mathcal{B^*}$. Under deterministic dynamic, however, this can be avoided as we do not need to account for an expectation over the next states. This simplifies the bellman equations. We develop two simple algorithms for this case without needing $\mathcal{B^*}$.




\paragraph{Value Iteration. }
We can write the value-iteration objective as:
\begin{align}
     Q(\s, \a) &\leftarrow r(\s, \a) + \gamma  V_\theta(\s'), \\
    \mathcal{J}(\theta) &= \mathbb{E}_{s\sim \rho_\mu, a \sim \mu(\cdot | s)} \left[ e^{(Q(\s, \a) - V_\theta(\s))/\beta} - (Q(\s, \a) - V_\theta(\s))/\beta - 1 \right].
\end{align}
Here, we learn a single model of the values $V_\theta(\s)$ to directly solve Equation~\ref{eq:value}. For the current value estimate $V_\theta(\s)$, we calculate targets $r(\s, \a) + \gamma V_\theta(\s)$ and find a new estimate $V'_\theta(\s)$ by fitting $\mathbb{L}^\beta_{\mu}$ with our objective $\mathcal{J}$. Using our Gumbel Regression framework, we can guarantee that as  $\mathcal{J}$ finds a consistent estimate of the $\mathbb{L}^\beta_{\mu}$, and $V_\theta(\s)$ will converge to the optimal $V(\s)$ upto some sampling error.

\paragraph{Q-Iteration. }

Alternatively, we can develop a Q-iteration objective solving the recurrence:
\begin{align}
\label{eq:q}
        Q_{t+1}(\s, \a) &= r(\s, \a) + \gamma \mathbb{L}^\beta_{\a' \sim \mu}\left[Q_t(\s', \a')\right] \\
                        &= r(\s, \a) + \mathbb{L}^{\gamma \beta}_{\a' \sim \mu}\left[\gamma Q_t(\s', \a')\right] \\
                        &=  \mathbb{L}^{\gamma \beta}_{\a' \sim \mu} \left[r(\s, \a) + \gamma  Q_t(\s', \a') \right].
\end{align}
where we can rescale $\beta$ to $\gamma \beta$ to move  $\mathbb{L}$ out.

This gives the objective:
\begin{align}
     Q^t(\s, \a) &\leftarrow r(\s, \a) + \gamma  Q_\theta(\s', \a'), \\
    \mathcal{J}(Q_\theta) &= \mathbb{E}_{\mu(\s, a, \s')} \left[ e^{(Q^t(\s, \a) - Q_\theta(\s, \a))/ \gamma \beta} - (Q^t(\s, \a) - Q_\theta(\s, \a))/\gamma \beta - 1 \right].
\end{align}

Thus, this gives a method to directly estimate $Q_\theta$ without learning values, and forms our \X TD3 method in the main paper. Note, that $\beta$ is a hyperparameter, so we can use an alternative hyperparameter $\beta' = \gamma \beta$ to simplify the above.

We can formalize this as a Lemma in the deterministic case:

\begin{lemma}
Let
$$
 \mathcal{J}(\mathcal{T}_\mu Q - Q') =  \mathbb{E}_{\s, \a, \s', \a' \sim \mu}\left[e^{(\mathcal{T}_{\mu} Q(\s, \a) - Q^{\prime}(\s, \a)/\gamma \beta} - ({\mathcal{T_{\mu}} Q(\s, \a) - Q^{\prime}(\s, \a)})/\gamma \beta - 1 \right] .
$$
where $\mathcal{T_{\mu}}$ is a linear operator that maps $Q$ from current $(\s, \a)$ to the next $(\s', \a')$:
$
\mathcal{T_{\mu}} Q(\s, \a) := r(\s, \a) + \gamma Q(\s', \a')
$

Then we have $\mathcal{B^*}Q^t = \underset{Q^{\prime} \in \Omega}{\operatorname{argmin}} \ \mathcal{J}(\mathcal{T}_\mu Q^t - Q')$, where $\Omega$ is the space of $Q$-functions.
\end{lemma}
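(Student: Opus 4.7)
The plan is to recognize $\mathcal{J}$ as the Gumbel regression loss of Section 3.2 (with temperature $\gamma\beta$) applied to the scalar target random variable $Y(\s,\a) := \mathcal{T}_\mu Q^t(\s,\a) = r(\s,\a) + \gamma Q^t(\s',\a')$, whose randomness lives in $(\s',\a')$ conditional on $(\s,\a)$. First I would rewrite the joint expectation as an outer expectation over $(\s,\a) \sim \rho_\mu$ of an inner expectation over $\s'$ (from dynamics) and $\a' \sim \mu(\cdot \mid \s')$, so that
\begin{equation*}
\mathcal{J}(\mathcal{T}_\mu Q^t - Q') = \mathbb{E}_{(\s,\a)}\!\left[\,\mathbb{E}_{\s',\a' \mid \s,\a}\!\left[e^{(Y(\s,\a) - Q'(\s,\a))/\gamma\beta} - (Y(\s,\a) - Q'(\s,\a))/\gamma\beta - 1\right]\right].
\end{equation*}
Because $Q'$ ranges over the full function space $\Omega$ and the integrand is nonnegative (with equality iff $Y = Q'$), the outer expectation is minimized by pointwise minimization of the inner Gumbel loss at each $(\s,\a)$.

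Next, I would invoke Lemma 2, which states that the Gumbel loss $\mathcal{L}(h) = \mathbb{E}[e^{(x-h)/\beta} - (x-h)/\beta - 1]$ has a unique minimum at $h = \beta \log \mathbb{E}[e^{x/\beta}]$. Applying this pointwise with temperature $\gamma\beta$ to the random variable $Y(\s,\a)$ gives
\begin{equation*}
Q'^{*}(\s,\a) = \gamma\beta \log \mathbb{E}_{\s',\a' \mid \s,\a}\!\left[e^{(r(\s,\a) + \gamma Q^t(\s',\a'))/\gamma\beta}\right].
\end{equation*}
Since $r(\s,\a)$ is deterministic given $(\s,\a)$, I can pull $e^{r(\s,\a)/\gamma\beta}$ out of the expectation and absorb the $\log$ on it, yielding $Q'^{*}(\s,\a) = r(\s,\a) + \gamma\beta \log \mathbb{E}_{\s',\a' \mid \s,\a}[e^{Q^t(\s',\a')/\beta}]$, where I have also used $\gamma\beta \cdot (1/\gamma\beta) \cdot \gamma = \gamma / \beta$ when rescaling inside the exponent to get a clean $Q^t/\beta$ term.

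Under the deterministic dynamics assumption of this subsection, the conditional distribution of $\s'$ given $(\s,\a)$ collapses to a Dirac mass, so only the expectation over $\a' \sim \mu(\cdot \mid \s')$ remains, and the expression reduces to
\begin{equation*}
Q'^{*}(\s,\a) = r(\s,\a) + \gamma \cdot \beta \log \mathbb{E}_{\a' \sim \mu(\cdot \mid \s')}\!\left[e^{Q^t(\s',\a')/\beta}\right] = r(\s,\a) + \gamma\, \mathbb{L}^{\beta}_{\a' \sim \mu(\cdot\mid\s')}\!\left[Q^t(\s',\a')\right],
\end{equation*}
which by definition is exactly $\mathcal{B}^* Q^t(\s,\a)$. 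The main obstacle I anticipate is a bookkeeping one rather than a conceptual one: keeping track of the two temperatures $\beta$ and $\gamma\beta$ while pulling the reward out of the log-sum-exp, and justifying the pointwise-minimization step (which requires $\Omega$ to be rich enough that no structural constraint ties $Q'(\s,\a)$ across states). If one wanted to extend this beyond deterministic dynamics, the last step would fail because the $\log\mathbb{E}_{\s'}$ cannot be exchanged with $\gamma$-scaling cleanly, which is precisely why the main paper introduces the separate value network $V_\theta$ in the stochastic case.
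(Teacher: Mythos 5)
Your proof is correct and takes essentially the same route as the paper's: invoke the Gumbel-regression minimizer lemma (Lemma~\ref{lemma:2}) pointwise at temperature $\gamma\beta$, then use deterministic dynamics to establish $\mathbb{L}^{\gamma\beta}_{\a'\sim\mu}[\mathcal{T}_\mu Q^t(\s,\a)] = r(\s,\a) + \gamma\,\mathbb{L}^{\beta}_{\a'\sim\mu}[Q^t(\s',\a')] = \mathcal{B}^*Q^t(\s,\a)$, which is exactly the two-line argument in Appendix~\ref{app:xql}. The only blemish is the parenthetical ``$\gamma\beta\cdot(1/\gamma\beta)\cdot\gamma=\gamma/\beta$,'' which should just read $\gamma$ (the exponent rescales by $\gamma/(\gamma\beta)=1/\beta$ and the prefactor stays $\gamma\beta$); your displayed algebra is nonetheless correct, so this is a typo rather than a gap.
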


\begin{proof}
We use that in deterministic dynamics,
$$
 \mathbb{L}^{\gamma \beta}_{\a' \sim \mu} [\mathcal{T_{\mu}} Q(\s, \a)] = r(\s, \a) + \gamma  \mathbb{L}^{ \beta}_{\a' \sim \mu}  [Q(\s', \a')] = \mathcal{B^*}Q(\s, \a)
$$

Then solving for the unique minima for $\mathcal{J}$ establishes the above results. 

Thus, optimizing $\mathcal{J}$ with a fixed-point is equivalent to Q-iteration with the Bellman operator.

\end{proof}

\subsection{Bridging soft and conservative Q-Learning}


\paragraph{Inherent Convervatism in \X QL}

Our method is inherently conservative similar to CQL~\citep{kumar2020conservative} in that it underestimates the value function (in vanilla Q-learning) $V^\pi(\s)$ by $- \beta \ \mathbb{E}_{\mathbf{a} \sim \pi(\mathbf{a} \mid \mathbf{s})}\left[\log \frac{\pi(\mathbf{a} \mid \mathbf{s})}{\pi_{\mathcal{D}}(\mathbf{a} \mid \mathbf{s})}\right]$, whereas CQL understimates values by a factor $- \beta \ \mathbb{E}_{\mathbf{a} \sim \pi(\mathbf{a} \mid \mathbf{s})}\left[ \frac{\pi(\mathbf{a} \mid \mathbf{s})}{\pi_{\mathcal{D}}(\mathbf{a} \mid \mathbf{s})} - 1\right]$, where $\pi_\mathcal{D}$ is the behavior policy. 
Notice that the underestimation factor transforms $V^\pi$ in vanilla Q-learning into $V^\pi$ used in the soft-Q learning formulation. Thus, we observe that KL-regularized Q-learning is inherently conservative, and this conservatism is built into our method. 

Furthermore, it can be noted that CQL conservatism can be derived as adding a $\chi^2$ regularization to an MDP and although not shown by the original work~\citep{kumar2020conservative} or any follow-ups to our awareness, the last term of Eq. 14 in CQL's Appendix B~\citep{kumar2020conservative}, is simply $\chi^2(\pi || \pi_\mathcal{D})$ and what the original work refers to as $D_{CQL}$ is actually  the $\chi^2$ divergence. Thus, it is possible to show that all the results for CQL  hold for our method by simply replacing $D_{CQL}$ with $D_{KL}$ i.e. the $\chi^2$ divergence with the KL divergence everywhere.

We show a simple proof below that $D_{CQL}$ is the $\chi^2$ divergence:
\begin{align*}
D_{C Q L}\left(\pi, \pi_{\mathcal{D}}\right)(\mathbf{s})  &:=\sum_{\mathbf{a}} \pi(\mathbf{a} \mid \mathbf{s})\left[\frac{\pi(\mathbf{a} \mid \mathbf{s})}{\pi_{\D}(\mathbf{a} \mid \mathbf{s})}-1\right] \\
&=\sum_{\mathbf{a}}\left(\pi(\mathbf{a} \mid \mathbf{s})-\pi_{\D}(\mathbf{a} \mid \mathbf{s})+\pi_{\D}(\mathbf{a} \mid \mathbf{s})\right)\left[\frac{\pi(\mathbf{a} \mid \mathbf{s})}{\pi_{\D}(\mathbf{a} \mid \mathbf{s})}-1\right] \\
&=\sum_{\mathbf{a}}\left(\pi(\mathbf{a} \mid \mathbf{s})-\pi_{\D}(\mathbf{a} \mid \mathbf{s})\right)\left[\frac{\pi(\mathbf{a} \mid \mathbf{s})-\pi_{\D}(\mathbf{a} \mid \mathbf{s})}{\pi_{\D}(\mathbf{a} \mid \mathbf{s})}\right]+\sum_{\mathbf{a}} \pi_{\D}(\mathbf{a} \mid \mathbf{s})\left[\frac{\pi(\mathbf{a} \mid \mathbf{s})}{\pi_{\D}(\mathbf{a} \mid \mathbf{s})}-1\right] \\
&=\sum_{\mathbf{a}} \pi_\D(\mathbf{a} \mid \mathbf{s})\left[\frac{\pi(\mathbf{a} \mid \mathbf{s})}{\pi_{\D}(\mathbf{a} \mid \mathbf{s})}-1\right]^2 +0 \text { since, } \sum_{\mathbf{a}} \pi(\mathbf{a} \mid \mathbf{s})=\sum_{\mathbf{a}} \pi_{\D}(\mathbf{a} \mid \mathbf{s})=1 \\
&=\chi^2(\pi(\cdot \mid \mathbf{s}) \ || \ \pi_\D(\cdot \mid \mathbf{s})), \text{ using the definition of chi-square divergence }
\end{align*}

\paragraph{Why \X-QL is better than CQL for offline RL}
In light of the above results, we know that CQL adds a $\chi^2$ regularization to the policy $\pi$ with respect to the behavior policy $\pi_\D$, whereas our method does the same using the reverse-KL divergence.

Now, the reverse-KL divergence has a mode-seeking behavior, and thus our method will find a policy that better fits the mode of the behavior policy and is more robust to random actions in the offline dataset. CQL does not have such a property and can be easily affected by noisy actions in the dataset. 

\paragraph{Connection to Dual KL representation}
For given distributions $\mu$ and $\pi$, we can write their KL-divergence using the dual representation proposed by IQ-Learn~\citep{garg2021iqlearn}: $$D_{KL}(\pi \ || \ \mu) = \max_{x \in \R} \E_{\mu}[-e^{-x}] - \E_\pi[x] - 1,$$
which is maximized for $x = -\log(\pi/\mu)$. 

We can make a clever substitution to exploit the above relationship. Let
$x = (Q - \mathcal{T}^{\pi} \hat{Q}^{k})/\beta$ for a variable $Q \in \R$ and a fixed constant $\mathcal{T}^{\pi} \hat{Q}^{k}$, then on variable substitution we get the equation:
\begin{equation*}
\E_{s \sim \rho_\mu}[D_{KL}(\pi(\cdot |\s) \ || \ \mu(\cdot | \s))] = \min_Q \mathcal{L}(Q), \text{with}
\end{equation*}
\begin{equation*}
\mathcal{L}(Q)  =  \mathbb{E}_{\s \sim \rho_\mu, \a \sim \mathcal{\mu}(\cdot| \s)}\left[e^{({\mathcal{T}}^{\pi} \hat{Q}^{k}(\s, \a) - Q(\s, \a))/\beta}\right] - \mathbb{E}_{\s \sim \rho_\mu, \a \sim \pi(\cdot \mid \s)}[({\mathcal{T}}^{\pi} \hat{Q}^{k}(\s, \a) - Q(\s, \a))/\beta] - 1
\end{equation*}
This gives us Equation~\ref{eq:vanilla_q_update} in Section~\ref{sec:max_ent_without_ent} of the main paper, and is minimized for $Q = \mathcal{T}^{\pi} \hat{Q}^{k} - \beta \log(\pi/\mu)$ as we desire. Thus, this lets us transform the regular Bellman update into the soft-Bellman update.

\section{Experiments}
\label{app:experiments}
In this section we provide additional results and more details on all experimental procedures.

\subsection{A Toy Example}

\begin{figure}[h]
\vskip -5pt
\centering
\includegraphics[width=\linewidth]{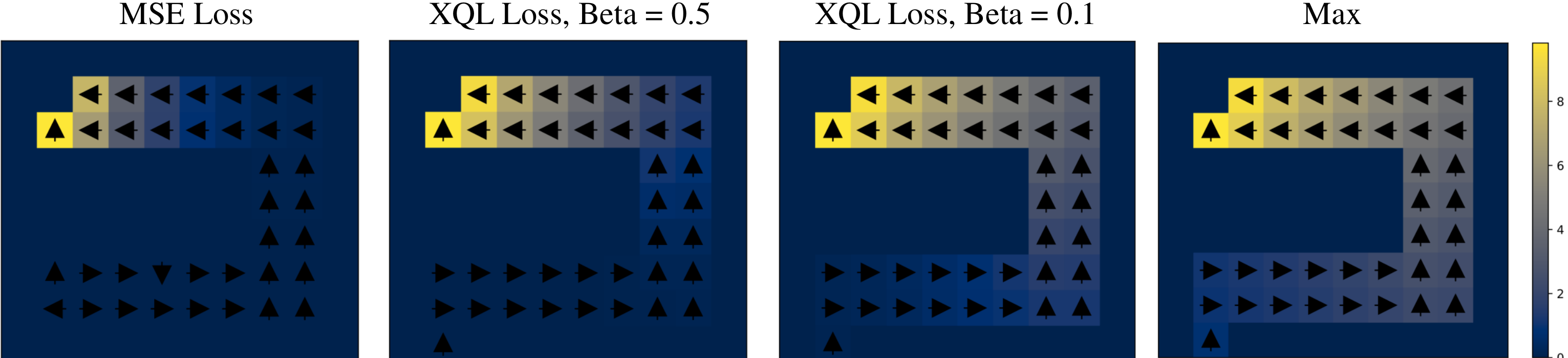}
\vskip -7pt
\caption{Here we show the effect of using different ways of fitting the value function on a toy grid world, where the agents goal is to navigate from the beginning of the maze on the bottom left to the end of the maze on the top left. The color of each square shows the learned value. As the environment is discrete, we can investigate how well Gumbel Regression fits the maximum of the Q-values. As seen, when MSE loss is used instead of Gumbel regression, the resulting policy is poor at the beginning and the learned values fail to propagate. As we increase the value of beta, we see that the learned values begin to better approximate the optimal max Q policy shown on the very right.}
\label{fig:toy}
\vskip -5pt
\end{figure}

\subsection{Bellman Error Plots}

\begin{figure}[H]
\centering
\includegraphics[width=\textwidth]{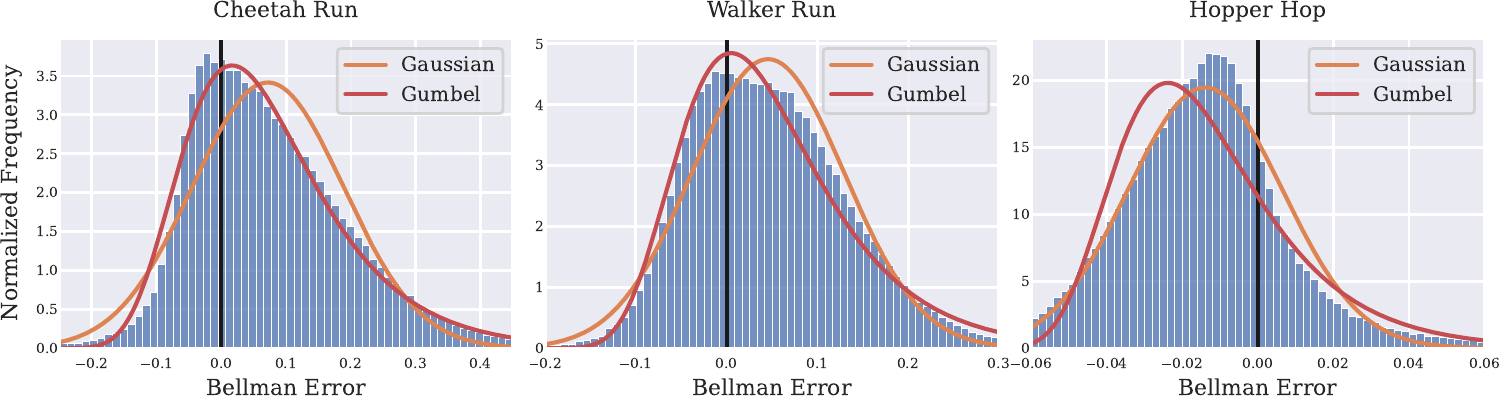}
\vspace{-0.15in}
\caption{\small Additional plots of the error distributions of SAC for different environments. We find that the Gumbel distribution strongly fit the errors in first two environments, Cheetah and Walker, but provides a worse fit in the Hopper environment. Nonetheless, we see performance improvements in Hopper using our approach.}
\label{fig:sac_dists}
\vspace{-10pt}
\end{figure}

\begin{figure}[H]
\centering
\includegraphics[width=\textwidth]{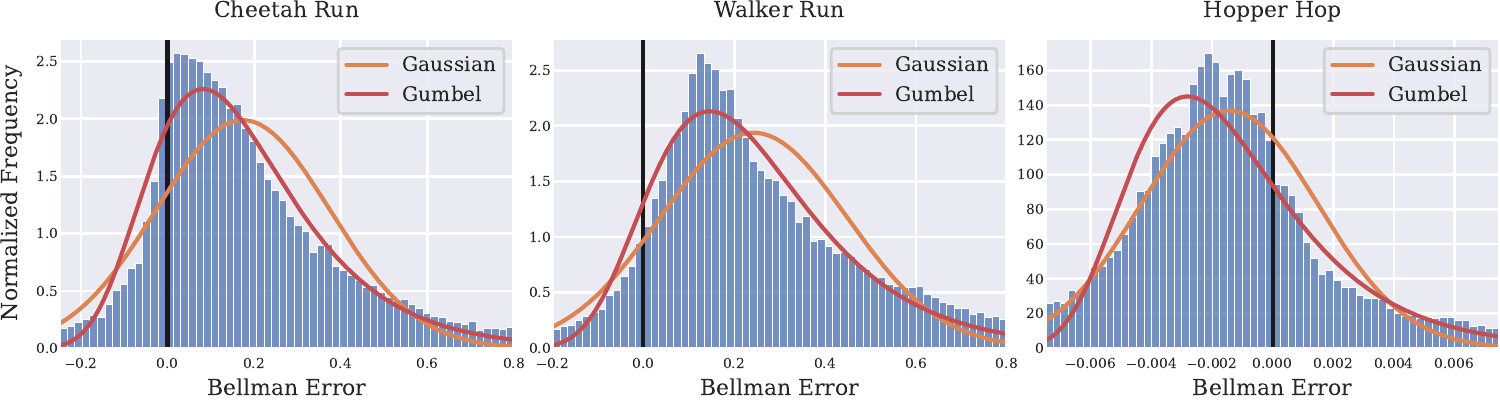}
\vspace{-0.15in}
\caption{\small Plots of the error distributions of TD3 for different environments.}
\label{fig:td3_dists}
\vspace{-10pt}
\end{figure}

Additional plots of the error distributions for SAC and TD3 can be found in Figure \ref{fig:sac_dists} and Figure \ref{fig:td3_dists}, respectively. 
Figure \ref{fig:errors} and the aforementioned plots were generated by running RL algorithms for 100,000 timesteps and logging the bellman errors every 5,000 steps. In particular, the Bellman errors were computed as: 
$$r(\s, \a) + \gamma Q_{\theta_1}(\s',\pi_\psi(\s')) - Q_{\theta_1}(\s, \a)$$
In the above equation $Q_{\theta_1}$ represents the first of the two Q networks used in the Double Q trick. We do not use target networks to compute the bellman error, and instead compute the fully online quantity. $\pi_\psi(\s')$ represents the mean or deterministic output of the current policy distribution. We used an implementation of SAC based on \citet{pytorch_sac} and an implementation of TD3 based on \citet{Fujimoto2018AddressingFA}. For SAC we did the entropy term was not added when computing the error as we seek to characterize the standard bellman error and not the soft-bellman error. Before generating plots the errors were clipped to the ranges shown. This tended prevented over-fitting to large outliers. The Gumbel and Gaussian curves we fit using MLE via Scipy.



\subsection{Numeric Stability}
In practice, a naive implementation of the Gumbel loss function $\mathcal{J}$ from Equation \ref{eq:v} suffers from stability issues due to the exponential term. We found that stabilizing the loss objective was essential for training. Practically, we follow the common max-normalization trick used in softmax computation. This amounts to factoring out $e^{\max_z z}$ from the loss and consequently scaling the gradients. This adds a per-batch adaptive normalization to the learning rate.
We additionally clip loss inputs that are too large to prevent outliers. An example code snippet in Pytorch is included below:

\begin{minted}[fontsize=\small]{python}
def gumbel_loss(pred, label, beta, clip):
    z = (label - pred)/beta
    z = torch.clamp(z, -clip, clip)
    max_z = torch.max(z)
    max_z = torch.where(max_z < -1.0, torch.tensor(-1.0), max_z)
    max_z = max_z.detach() # Detach the gradients
    loss = torch.exp(z - max_z) - z*torch.exp(-max_z) - torch.exp(-max_z)    
    return loss.mean()
\end{minted}
In some experiments we additionally clip the value of the gradients for stability.

\subsection{Offline Experiments}

In this subsection, we provide additional results in the offline setting and hyper-parameter and implementation details. Figure \ref{fig:offline_rl} shows learning curves which include baseline methods. We see that \X QL exhibits extremely fast convergence, particularly when tuned. One issue however, is numerical stability. The untuned version of \X QL exhibits divergence on the Antmaze environment. 

\begin{figure}[h]
\vskip -5pt
\centering
\includegraphics[width=\linewidth]{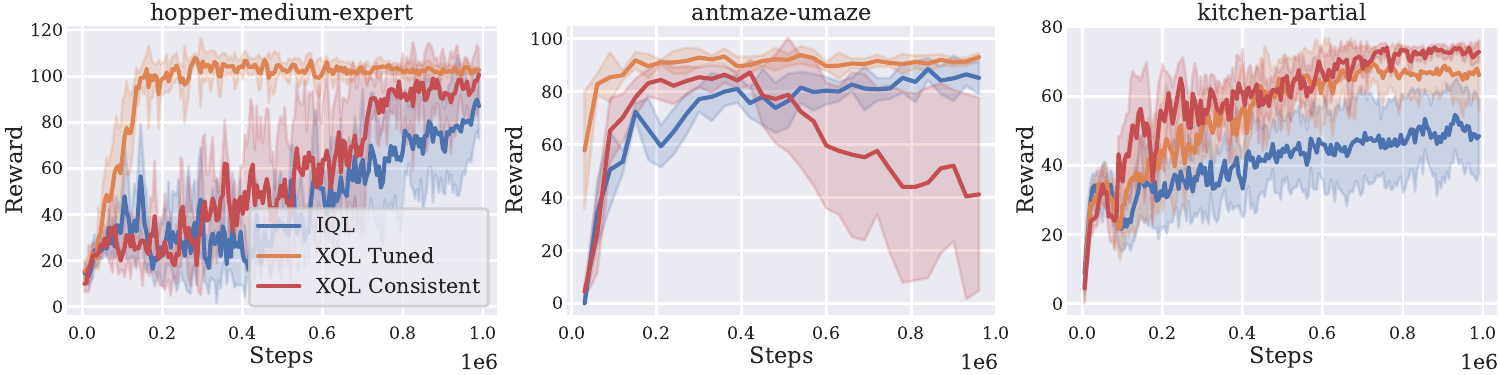}
\vskip -7pt
\caption{\small\textbf{Offline RL Results}. We show the returns vs number of training iterations for the D4RL benchmark, averaged over 6 seeds. For a fair comparison, we use batch size of 1024 for each method. XQL Tuned tunes the temperature for each environment, whereas XQL consistent uses a default temperature.}
\label{fig:offline_rl}
\vskip -5pt
\end{figure}

We base our implementation of $\mathcal{X}$-QL off the official implementation of IQL from \citet{kostrikov2021offline}. We use the same network architecture and also apply the Double-$Q$ trick. We also apply the same data preprocessing which is described in their appendix. We additionally take their baseline results and use them in Table \ref{tab:d4rl}, Table \ref{tab:finetuning}, and Table \ref{tab:franca_adroit} for accurate comparison.

We keep our general algorithm hyper-parameters and evaluation procedure the same but tune $\beta$ and the gradient clipping value for each environment. Tuning values of $\beta$ was done via hyper-parameter sweeps over a fixed set of values $[0.6, 0.8, 1, 2, 5]$ for offline save for a few environments where larger values were clearly better. Increasing the batch size tended to also help with stability, since our rescaled loss does a per-batch normalization. AWAC parameters were left identical to those in IQL. For MuJoCo locomotion tasks we average mean returns over 10 evaluation trajectories and 6 random seeds. For the AntMaze tasks, we average over 1000 evaluation trajectories. We don't see stability issues in the mujoco locomotion environments, but found that offline runs for the AntMaze environments could occasionally exhibit divergence in training for a small $\beta < 1$. In order to help mitigate this, we found adding Layer Normalization \citep{ba2016layer} to the Value networks to work well. Full hyper-parameters we used for experiments are given in Table \ref{tab:offline_hyperparameters}.



\subsection{Offline Ablations}
In this section we show hyper-parameter ablations for the offline experiments. In particular, we ablate the temperature parameter, $\beta$, and the batch size. The temperature $\beta$ controls the strength of KL penalization between the learned policy and the dataset behavior policy, and a small $\beta$ is beneficial for datasets with lots of random noisy actions, whereas a high $\beta$ favors more expert-like datasets.

Because our implementation of the Gumbel regression loss normalizes gradients at the batch level, larger batches tended to be more stable and in some environments lead to higher final performance. To show that our tuned \X QL method is not simply better than IQL due to bigger batch sizes, we show a comparison with a fixed batch size of 1024 in Fig. \ref{fig:offline_rl}.

\begin{figure}[h]
\vskip -4pt
\centering
\includegraphics[width=\linewidth]{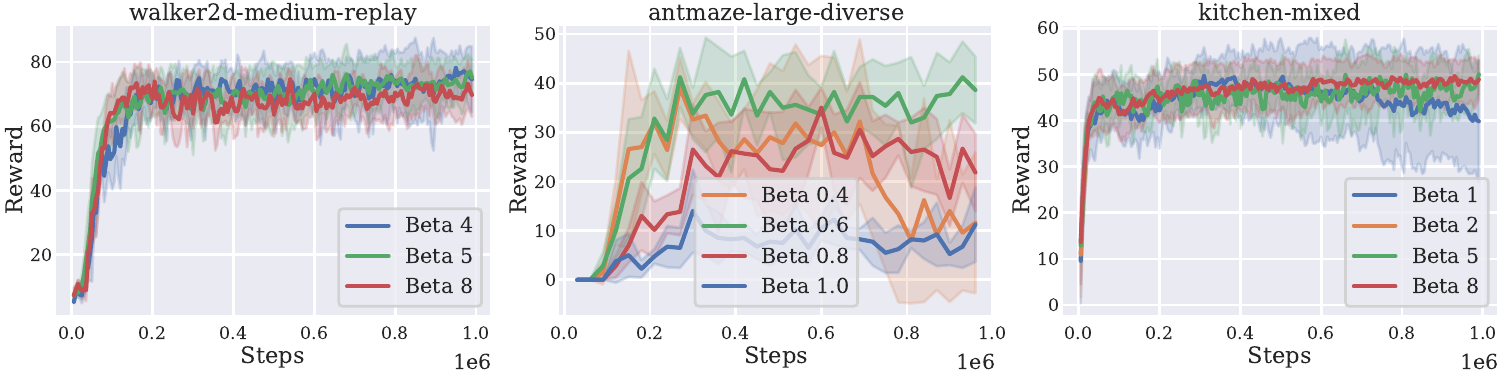}
\vskip -3pt
\caption{\small\textbf{$\beta$ Ablation}. Too large of a temperature $\beta$ and performance drops. When $\beta$ is too small, the loss becomes sensitive to noisy outliers, and training can diverge. Some environments are more sensitive to $\beta$ than others.}
\label{fig:beta_ablation}
\vskip -2pt
\end{figure}

\begin{figure}[h]
\vskip -4pt
\centering
\includegraphics[width=\linewidth]{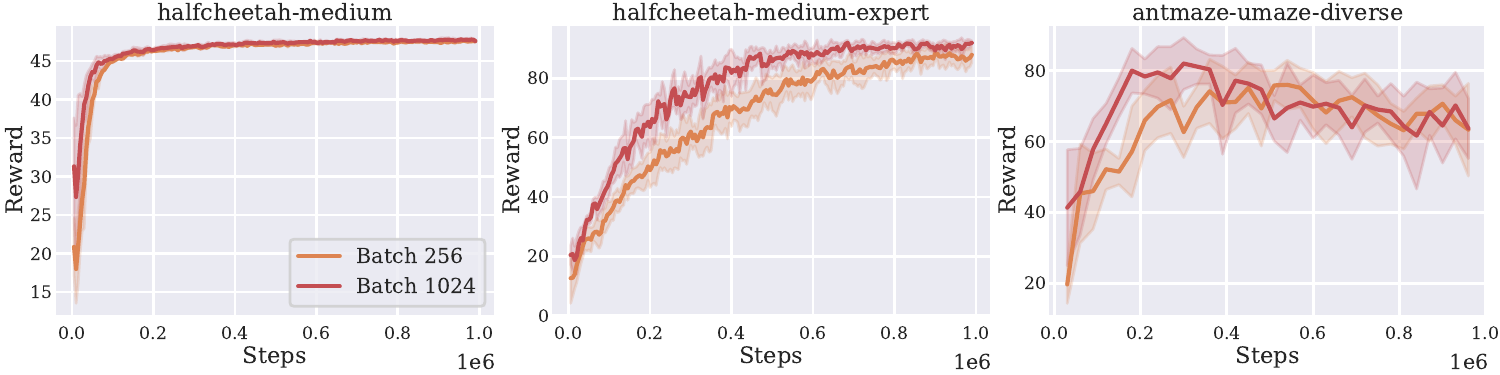}
\vskip -3pt
\caption{\small\textbf{Batch Size Ablation}. Larger batch sizes can make Gumbel regression more stable.}
\label{fig:atari}
\vskip -2pt
\end{figure}

\subsection{Online Experiments}
We base our implementation of SAC off pytorch\_sac \citep{pytorch_sac} but modify it to use a Value function as described in \citet{haarnoja2017reinforcement}. Empirically we see similar performance with and without using the value function, but leave it in for fair comparison against our $\mathcal{X}$-SAC variant. We base our implementation of TD3 on the original author's code from \cite{Fujimoto2018AddressingFA}. Like in offline experiments, hyper-parameters were left as default except for $\beta$, which we tuned for each environment. For online experiments we swept over $[1, 2, 5]$ for \X-SAC and TD3. We found that these values did not work as well for TD3 - DQ, and  swept over values $[3, 4, 10, 20]$. In online experiments we used an exponential clip value of 8. For SAC we ran three seeds in each environment as it tended to be more stable. For TD3 we ran four. Occasionally, our $\mathcal{X}$- variants would experience instability due to outliers in collected online policy rollouts causing exploding loss terms. We see this primarily in the Hopper and Quadruped environments, and rarely for Cheetah or Walker. For Hopper and Quadruped, we found that approximately one in six runs became unstable after about 100k gradient steps. This sort of instability is also common in other online RL algorithms like PPO due to noisy online policy collection. We restarted runs that become unstable during training. We verified our SAC results by comparing to \cite{pytorch_sac} and our TD3 results by comparing to \cite{QingLi2021continuousbenchmark} . We found that our TD3 implementation performed marginally better overall.

\begin{table}[]
\centering
\caption{\small Offline RL Hyperparameters used for \X-QL. The first values given are for the non per-environment tuned version of \X-QL, and the values in parenthesis are for the tuned offline results, \X-QL-T, where we find variance reduction tricks help stabilize training on some environments. Here, V-updates gives the number of value updates per Q update, and increasing it reduces the variance of value updates using Gumbel loss on some hard environments.}
\scriptsize
\begin{tabular}{l|llll}
\label{tab:offline_hyperparameters}
Env                          & Beta & Grad Clip & Batch Size & v\_updates \\ \hline
halfcheetah-medium-v2        & 2 (1)    &  7 (7)    & 256 (256)   & 1 (1) \\
hopper-medium-v2             & 2 (5)    &  7 (7)    & 256 (256)   & 1 (1)\\
walker2d-medium-v2           & 2 (10)   &  7 (7)    & 256 (256)   & 1 (1)\\
halfcheetah-medium-replay-v2 & 2 (1)    & 7 (5)    & 256  (256)  & 1 (1)\\
hopper-medium-replay-v2      & 2 (2)    &   7 (7)    & 256  (256)  & 1 (1)\\
walker2d-medium-replay-v2    & 2 (5)    &  7 (7)    & 256  (256)  & 1 (1)\\
halfcheetah-medium-expert-v2 & 2 (1)    &  7 (5)    & 256 (1024)  & 1 (1)\\
hopper-medium-expert-v2      & 2 (2)    &  7 (7)    & 256 (1024) & 1 (1)\\
walker2d-medium-expert-v2    & 2 (2)    &   7 (5)    & 256  (1024) & 1 (1)\\
antmaze-umaze-v0             & 0.6 (1)    &  7 (7)    & 256 (256)  & 1 (1)\\ 
antmaze-umaze-diverse-v0     & 0.6 (5)    &  7 (7)   & 256  (256) & 1 (1)\\
antmaze-medium-play-v0       & 0.6 (0.8)  &  7 (7)   & 256  (1024) & 1 (2)\\
antmaze-medium-diverse-v0    & 0.6  (0.6) &  7 (7)   & 256  (256)  & 1 (4)\\
antmaze-large-play-v0        & 0.6 (0.6)  &  7 (5)   & 256 (1024)  & 1 (1)\\
antmaze-large-diverse-v0     & 0.6 (0.6)  &  7 (5)   & 256  (1024) & 1 (1)\\ 
kitchen-complete-v0          & 5 (2)    &  7  (7)  & 256  (1024)  & 1 (1)\\
kitchen-partial-v0           & 5 (5)    &  7  (7) & 256  (1024)  & 1 (1)\\
kitchen-mixed-v0             & 5 (8)    &  7  (7)  & 256 (1024) & 1 (1)\\

pen-human-v0                 & 5 (5)    &  7  (7)  & 256  (256)   & 1 (1) \\
hammer-human-v0              & 5 (0.5)  &  7  (3)  & 256  (1024)  & 1 (4) \\
door-human-v0                & 5 (1)    &  7  (5)  & 256  (256)   & 1 (1)\\
relocate-human-v0            & 5 (0.8)  &  7  (5)  & 256  (1024)  & 1 (2) \\
pen-cloned-v0                & 5 (0.8)  &  7  (5)  & 256  (1024)  & 1 (2) \\
hammer-cloned-v0             & 5 (5)    &  7  (7)  & 256  (256)   & 1 (1) \\
door-human-v0                & 5 (5)    &  7  (7)  & 256  (256)   & 1 (1) \\
relocate-human-v0            & 5 (5)    &  7  (7)  & 256  (256)   & 1 (1) \\

\end{tabular}
\end{table}

\begin{table}[]
\centering
\caption{\small Hyperparameters for online RL Algorithms}
\scriptsize
\begin{tabular}{l|l|l}
Parameter             & SAC        & TD3                \\ \hline
Batch Size            & 1024       & 256                \\
Learning Rate         & 0.0001     & 0.001              \\
Critic Freq           & 1          & 1                  \\
Actor Freq            & 1          & 2                  \\
Actor and Critic Arch & 1024, 1024 & 256, 256           \\
Buffer Size           & 1,000,000  & 1,000,000          \\
Actor Noise           & Auto-tuned & 0.1, 0.05 (Hopper) \\
Target Noise          & --         & 0.2               
\end{tabular}
\label{tab:online_hparams}
\vspace{-10 pt}
\end{table}

\begin{table}[]
\centering
\caption{\small Values of temperature $\beta$ used for online experiments}
\scriptsize
\begin{tabular}{l|lll}
Env           & $\mathcal{X}$-SAC & $\mathcal{X}$-TD3 & $\mathcal{X}$-TD3 - DQ \\ \hline
Cheetah Run   & 2                 & 5                 & 4                      \\
Walker Run    & 1                 & 2                 & 4                      \\
Hopper Hop    & 2                 & 2                 & 3                      \\
Quadruped Run & 5                 & 5                 & 20                    
\end{tabular}
\label{tab:online_beta}
\end{table}



\begin{figure}[h]
\vskip -4pt
\centering
\includegraphics[width=\linewidth]{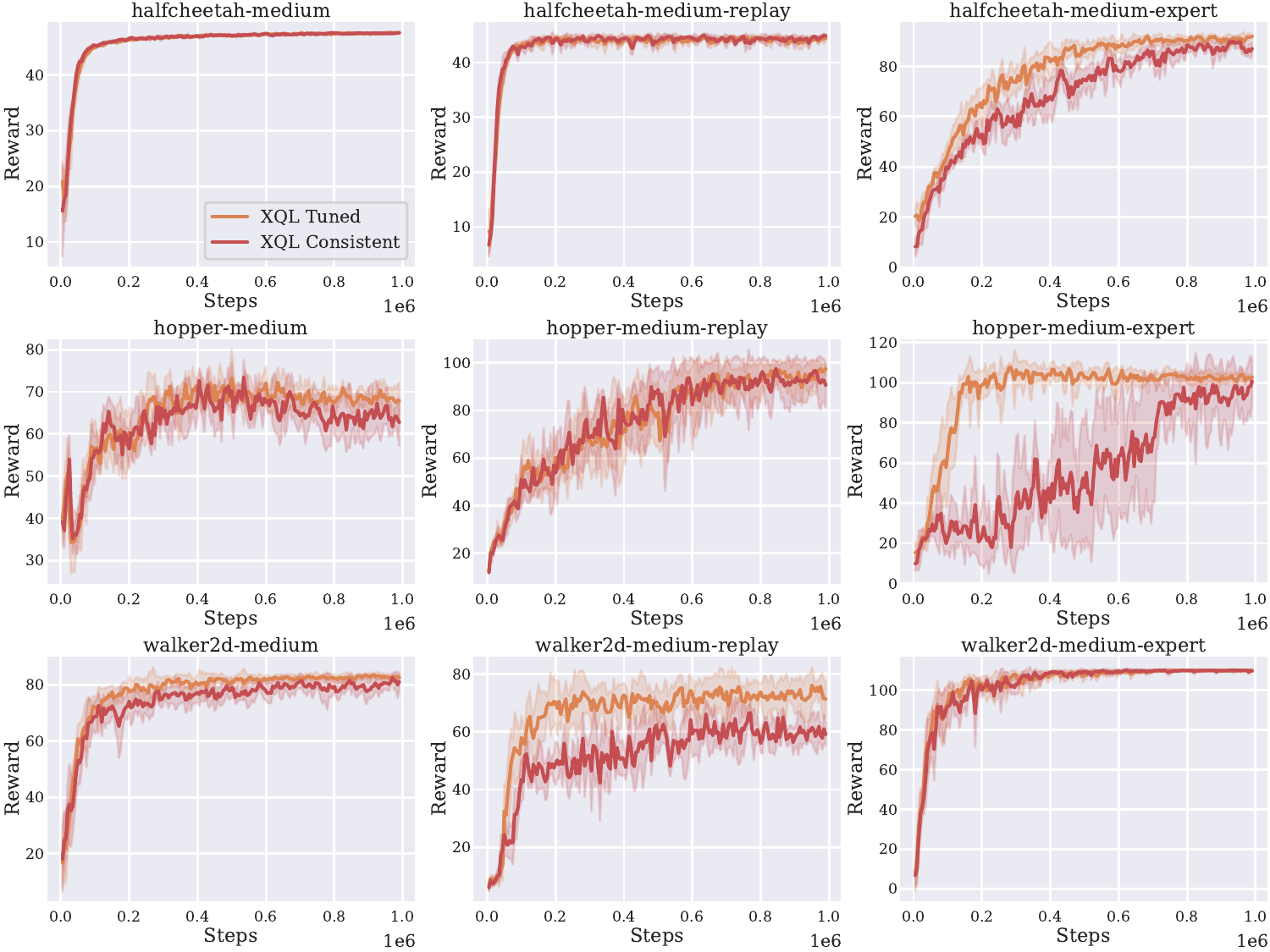}
\vskip -3pt
\caption{\small\textbf{Offline Mujoco Results}. We show the returns vs number of training iterations for the mujoco benchmarks in D4RL (Averaged over 6 seeds). \X QL Tuned gives results after hyper-parameter tuning to reduce run variance for each environment, and \X QL consistent uses the same hyper-parameters for every environment.}
\label{fig:mujoco}
\vskip -2pt
\end{figure}

\begin{figure}[h]
\vskip -4pt
\centering
\includegraphics[width=\linewidth]{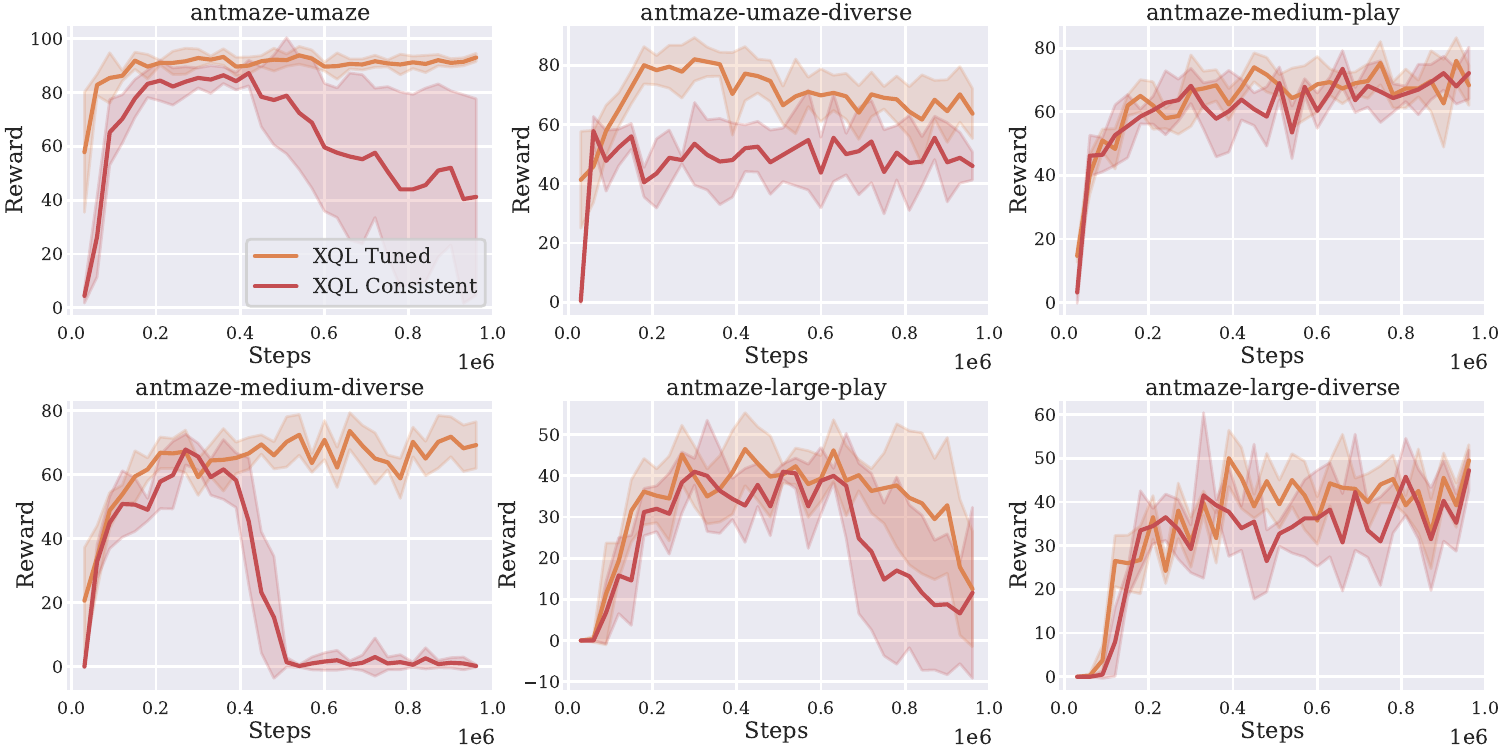}
\vskip -3pt
\caption{\small\textbf{Offline AntMaze Results}. We show the returns vs number of training iterations for the antmaze benchmarks in D4RL (Averaged over 6 seeds). \X QL Tuned gives results after hyper-parameter tuning to reduce run variance for each environment, and \X QL consistent uses the same hyper-parameters for every environment.}
\label{fig:antmaze}
\vskip -2pt
\end{figure}

\begin{figure}[h]
\vskip -4pt
\centering
\includegraphics[width=\linewidth]{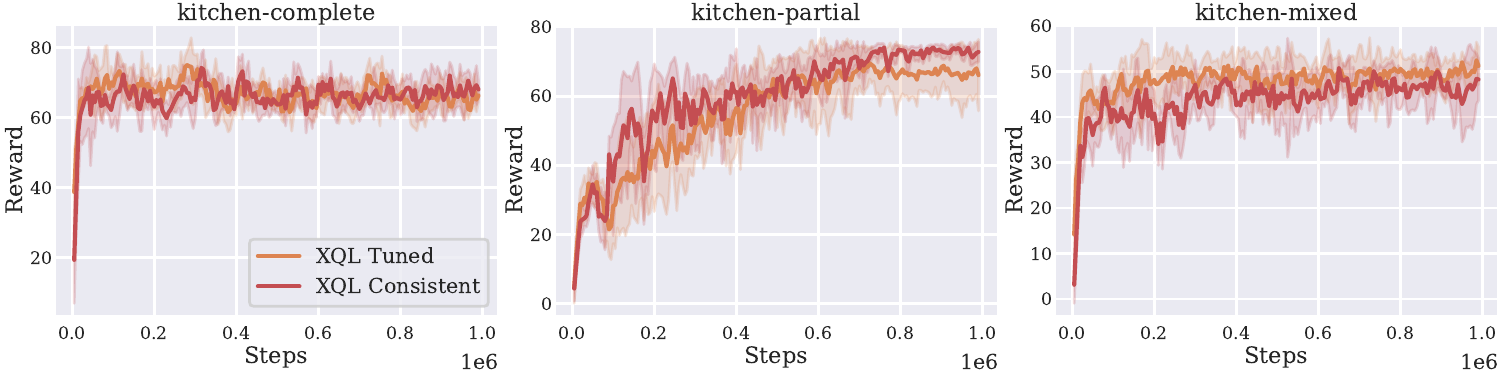}
\vskip -3pt
\caption{\small\textbf{Offline Franka Results}. We show the returns vs number of training iterations for the Franka Kitchen benchmarks in D4RL (Averaged over 6 seeds). \X QL Tuned gives results after hyper-parameter tuning to reduce run variance for each environment, and \X QL consistent uses the same hyper-parameters for every environment.}
\label{fig:frankacurves}
\vskip -2pt
\end{figure}

\begin{figure}[h]
\vskip -4pt
\centering
\includegraphics[width=\linewidth]{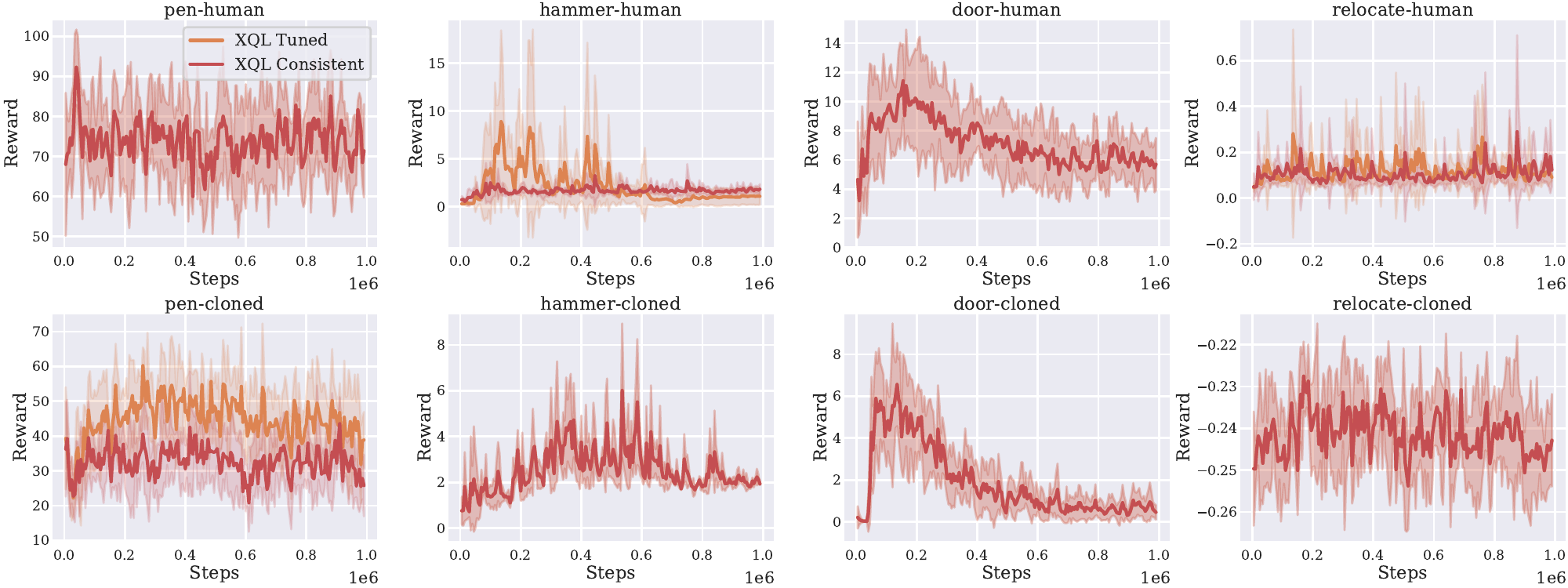}
\vskip -3pt
\caption{\small\textbf{Offline Androit Results}. We show the returns vs number of training iterations for the Androit benchmark in D4RL (Averaged over 6 seeds). \X QL Tuned gives results after hyper-parameter tuning to reduce run variance for each environment, and \X QL consistent uses the same hyper-parameters for every environment. On some environments the ``consistent'' hyperparameters did best.}
\label{fig:adrioit}
\vskip -2pt
\end{figure}

\end{document}